\newcommand{\E}{\mathbb{E}}
\newcommand{\PP}{\mathbb{P}}
\newcommand{\LL}{\mathcal{L}}
\newcommand{\RR}{\mathcal{R}}
\newcommand{\R}{\mathbb{R}}
\newcommand{\1}{\mathds{1}}
\newcommand{\bx}{\mathbf{x}}
\renewcommand{\dim}{{p}}
\newcommand{\pred}{{\mu_n}}
\newcommand{\cI}{\mathcal{I}}
\newcommand{\data}{\mathcal{D}}
\newcommand{\var}{\text{Var}}
\newcommand{\Gini}{\text{Gini}}
\newcommand{\cov}{\text{Cov}}
\newcommand{\lc}{\textrm{left}}
\newcommand{\rc}{\textrm{right}}
\newcommand{\Impurity}{\textrm{Impurity}}
\newcommand{\node}{t}
\newcommand{\decrease}{\Delta_{\cI}}
\newcommand{\tree}{T}
\newcommand{\MDI}{\textrm{\normalfont MDI}}
\title{A Debiased MDI Feature Importance Measure for Random Forests}
\author{
Xiao Li\thanks{The first two authors contributed equally to this paper.}\\
 Statistics Department\\
UC Berkeley\\
 sxli@berkeley.edu\\
\And
Yu Wang\\
 Statistics Department\\
UC Berkeley\\
 wang.yu@berkeley.edu
\And
Sumanta Basu\\
 Statistics and Data Science Department\\
 Computational Biology Department\\
Cornell University\\
 sumbose@cornell.edu\\
\And
Karl Kumbier\\
 Statistics Department\\
UC Berkeley\\
 kkumbier@berkeley.edu 
\And
Bin Yu \\
EECS, Statistics Department\\
UC Berkeley\\
 binyu@berkeley.edu}
\newtheorem{lemma}{Lemma}
\newtheorem{theorem}{Theorem}
\newtheorem{proposition}{Proposition}
\newtheorem{fact}{Fact}
\begin{document}
\maketitle
\begin{abstract}
    Tree ensembles such as Random Forests have achieved impressive empirical success across a wide variety of applications.
    To understand how these models make predictions, people routinely turn to feature importance measures calculated from tree ensembles. 
    It has long been known that Mean Decrease Impurity (MDI), one of the most widely used measures of feature importance, incorrectly assigns high importance to noisy features, leading to systematic bias in feature selection. In this paper, we address the feature selection bias of MDI from both theoretical and methodological perspectives. Based on the original definition of MDI by Breiman et al.  \cite{Breiman1984}
    for a single tree, we derive a tight non-asymptotic bound on the expected bias of MDI importance of noisy features, showing that deep trees have higher (expected) feature selection bias than shallow ones. However, it is not clear how to reduce the bias of MDI using its existing analytical expression. We derive a new analytical expression for MDI, and based on this new expression, we are able to propose a new MDI feature importance measure using out-of-bag samples, called MDI-oob. For both the simulated data and a genomic ChIP dataset, MDI-oob achieves state-of-the-art performance in feature selection from Random Forests for both deep and shallow trees.
\end{abstract}

\section{Introduction}
Understanding how a machine learning (ML) model makes predictions is important in many scientific and industrial problems \cite{Murdoch2019}. Appropriate interpretations can help increase the predictive performance of a model and provide new domain insights. 
While a line of study focuses on interpreting any generic ML model \cite{Strumbelj2014, Ribeiro2016}, there is a growing interest in developing specialized methods to understand specific models. 
In particular, interpreting Random Forests (RFs) \cite{Breiman2001} and its variants \cite{Loh2014,Strobl2008,Strobl2007a,Strobl2007b,Basu2017,kumbier2018refining} has become an important area of research due to the wide ranging applications of RFs in various scientific areas, such as genome-wide association studies (GWAS) \cite{Diaz-Uriarte2006}, gene expression microarray \cite{Lee2005AnData,Rodenburg2008}, and gene regulatory networks \cite{Huynh-Thu2010}. 

A key question in understanding RFs is how to assign feature importance. That is, which features does a RF rely on for prediction? One of the most widely used feature importance measures for RFs is mean decrease impurity (MDI) \cite{Breiman1984}. MDI computes the total reduction in loss or impurity contributed by all splits for a given feature. This method is computationally very efficient and has been widely used in a variety of applications
\cite{Sandri2008,Huynh-Thu2010}.  However, theoretical analysis of MDI has remained sparse in the literature \cite{kazemitabar2017variable}.  
Assuming there are an infinite number of samples, 
Louppe et al. \cite{Louppe2013} characterized MDI for totally randomized trees using mutual information between features and the response. They showed that noisy features, i.e., features independent of the outcome, have zero MDI importance. However, empirical studies have shown that MDI systematically assigns higher feature importance values to numerical features or categorical features with many categories \cite{Strobl2007b}. 
In other words, high MDI values do not always correspond to the predictive associations between features and the outcome. We call this phenomenon MDI \emph{feature selection bias}. Louppe \cite{louppe2014understanding} studied this issue and demonstrate via simulations that early stopping mechanisms (e.g., limited depth and larger leaf sizes) are effective to reduce the feature selection bias.

In this paper, using the original definition of MDI, we analyze the non-asymptotic behavior of MDI and bridge the gap between the population case and the finite sample case.  We find that under mild conditions, if the samples used for each tree are i.i.d, then the expected MDI feature importance of noisy features derived from any tree ensemble constructed on $n$ samples with $\dim$ features is upper bounded by $d_n\log(np)/m_n$, where $m_n$ is the minimum leaf size and $d_n$ is the maximum tree depth in the ensemble. In other words, deep trees with small leaves suffer more from feature selection bias. Our findings are particularly relevant for practical applications involving RFs, in which scenario deep trees are recommended \cite{Breiman2001} and used more often.
To reduce the feature selection bias for RFs, especially when the trees are deep, we derive a new analytical expression for MDI and then use this new expression to  propose a new feature importance measure that evaluates MDI using out-of-bag samples. We call this importance measure MDI-oob.
For both regression and classification problems, we use simulated data and a genomic dataset to demonstrate that MDI-oob often achieves 5\%--10\% higher AUC scores compared to other feature importance measures used in several publicly available packages including \texttt{party} \cite{party}, \texttt{ranger} \cite{ranger}, and \texttt{scikit-learn} \cite{Pedregosa2011}.

\subsection{Related works}
In addition to MDI \cite{Wei2015,Lundberg2018}, some other feature importance measures have been studied in the literature and used in practice: 
\begin{itemize}[leftmargin=0.1in]
    \item Split count, namely, the number of times a feature is used to split \cite{Strobl2007b}, can be used as a feature importance measure. This method has been studied in  \cite{Strobl2008,Basu2017} and is available in XGBoost \cite{Chen2016}.
    \item Mean decrease in accuracy (MDA) measures a feature's importance by the reduction in the model's accuracy after randomly permuting the values of a feature. The motivation of MDA is that permuting an important feature would result in a large decrease in the accuracy while permuting an unimportant feature would have a negligible effect. Different permutation choices have been studied in \cite{Strobl2008,Janitza2016}. 
\end{itemize}
Recently, Lundberg et al. \cite{Lundberg2018} show that for feature importance measures such as MDI and split counts,  
the importance of a feature does not always increase as the outcome becomes more dependent on that feature. To remedy this issue, they propose the tree SHAP feature importance, which focuses on giving consistent feature attributions to each sample. When individual feature importance is obtained, overall feature importance is straightforward to obtain by just averaging the individual feature importances across samples. 

While our paper focuses on interpreting trees learned via the classic RF procedure, there is another line of work that focuses on modifying the tree construction procedure to obtain better feature importance measures. 
Hothorn et al. \cite{Hothorn2006} introduced cforest in the R package \texttt{party} that grows classification trees based on a conditional inference framework. Strobl et al. \cite{Strobl2007b} showed that cforest suffers less from the feature selection bias. 
Sandri and Zuccolotto \cite{Sandri2008} proposed to create a set of uninformative pseudo-covariates to evaluate the bias in Gini importance. Nembrini et al. \cite{Nembrini2018a} gave a modified algorithm that is faster than the original method proposed by Sandri and Zuccolotto \cite{Sandri2008} with almost no overhead over the creation of the original RFs and available in the R package \texttt{ranger}. In a very recent paper, Zhou and Hooker \cite{zhou2019unbiased} proposed to evaluate the decrease in impurity at each node using out-of-bag samples.
However, our implementation is different from that in  \cite{zhou2019unbiased} and MDI-oob enjoys higher computational efficiency.

In Section \ref{Sec:numerical_study}, we will compare MDI-oob with all the aforementioned methods except the split count, for which we did not find a package that implements it for RFs. 


\subsection{Organization}

The rest of this paper is organized as follows. In Section \ref{Sec:bias_in_Gini_importance}, we provide a non-asymptotic analysis to quantify the bias in the MDI importance when noisy features are independent of relevant features.
In Section \ref{Sec:debaised_Gini_importance}, we give a new characterization of MDI and propose a new MDI feature importance using out-of-bag samples, which we call MDI-oob. In Section \ref{Sec:numerical_study}, we compare our MDI-oob with other commonly used feature importance measures in terms of feature selection accuracy using the simulated data and a genomic ChIP dataset.
We conclude our work and discuss possible future directions in Section \ref{Sec:conclusion}.

\section{Understanding the feature selection bias of MDI}\label{Sec:bias_in_Gini_importance}
In this section, we focus on understanding the finite sample properties of MDI importance and why it may have a significant bias in feature selection. We first briefly review the construction of RFs and introduce some important notations. Then, using the original definition of MDI, we give a tight upper bound to quantify the expected bias of MDI importance for a noisy feature. This upper bound is tight up to a $\log n$ factor where $n$ is the number of i.i.d. samples.

\subsection{Background and notations}
Suppose that the data set $\data$ contains $n$ i.i.d samples from a random vector $(X_1, \dots, X_p, Y)$, where $ X = (X_1, \dots, X_p)\in \R^p$ are $\dim$ input features and $Y\in \R$ is the response. The $i^{th}$ sample is denoted by $(\bx_i, y_i)$, where $\bx_i = (x_{i1},\ldots, x_{i\dim})$. We say that a feature $X_k$ is a \emph{noisy} feature if $X_k$ and $Y$ are independent, and a \emph{relevant} feature otherwise. Note that this definition of noisy features has also been used in many previous papers such as
\cite{Louppe2013,Scornet2015}. We denote $S \subset [p]$ as the set of indexes of relevant features. We are particularly interested in the case where the number of relevant features is small, namely, $|S|$ is much smaller than $\dim$. For any number $m\in \mathbb{N}$, $[m]$ denotes the set of integers $\{1,\ldots, m\}$. For any hyper-rectangle $R \subset \R^p$, let $\1(X\in R)$ be the indicator function taking value one when $X\in R$ and zero otherwise. 

RFs are an ensemble of classification and regression trees, where each tree $\tree$ defines a mapping from the feature space to the response. Trees are constructed independently of one another on a bootstrapped or subsampled data set $\data^{(\tree)}$ of the original data $\data$. Any node $\node$ in a tree $\tree$ represents a subset (usually a hyper-rectangle) $R_\node$ of the feature space. A split of the node $\node$ is a pair $(k, z)$ which divides the hyper-rectangle $R_\node$ into two hyper-rectangles $R_\node\cap \1(X_k \le  z)$ and $R_\node\cap \1(X_k > z)$, corresponding to the left child $\node^\lc$ and right child $\node^\rc$ of node $\node$, respectively. For a node $\node$ in a tree $\tree$, $N_n(\node) = |\{i\in \data^{(\tree)}:\bx_i \in R_\node\}|$ denotes the number of samples falling into 
$R_\node$ and 
\begin{equation}
\label{eq:mun}
\mu_n(\node) := \frac{1}{N_n(\node)}\sum_{i:\bx_i \in R_\node} y_i
\end{equation}
denotes their average response. 

Each tree $\tree$ is grown using a recursive procedure which proceeds in two steps for each node $\node$. First, a subset $\mathcal{M} \subset [p]$ of features is chosen uniformly at random. Then the optimal split $v(\node) \in \mathcal{M}, z(\node) \in \R$ is determined by maximizing:
\begin{equation}
\label{eq:impdecrease}
    \decrease(\node) := \Impurity(\node) - \frac{N_n(\node^\lc)}{N_n(\node)}\Impurity(\node^\lc) - \frac{N_n(\node^\rc)}{N_n(\node)}\Impurity(\node^\rc)
\end{equation}
for some impurity measure $\Impurity(\node)$. The procedure terminates at a node $\node$ if two children contain too few samples, i.e., $\min\{N_n(\node^\lc), N_n(\node^\rc)\} \leq m_n$ , or if all responses are identical. The threshold $m_n$ is called the \emph{minimum leaf size}. If a node $\node$ does not have any children, it is called a leaf node; otherwise, it is called an inner node. We define the set of inner nodes of a tree $\tree$ as $I(T)$. We say that $\tree'$ is a sub-tree of $\tree$ if $\tree'$ can be obtained by pruning some nodes in $\tree$. 

Some popular choices of the impurity measure $\Impurity(\node)$ include variance, Gini index, or entropy. For simplicity, we focus on the variance of the responses, i.e.,
\begin{equation}
\Impurity(\node) = \frac{1}{N_n(t)} \sum_{i: \bx_i \in R_t} (y_i - \mu_n(t))^2,
\end{equation}
throughout the paper unless stated otherwise.
Later we show that this definition of impurity is equivalent to the Gini index of categorical variables with one hot encoding (see Remark in Section \ref{Sec:debaised_Gini_importance}) 


The Mean Decrease Impurity (MDI) feature importance of $X_k$, with respect to a single tree $\tree$ (first proposed by Breiman et al. in \cite{Breiman1984}) and an ensemble of $n_{tree}$ trees $\tree_1, \dots, \tree_{n_{tree}}$, can be written as
\begin{align}
  \MDI(k, \tree) = \sum_{\node \in I(\tree), v(\node) = k} \frac{N_n(\node)}{n}\Delta_{\cI}(\node)\quad\text{ and }\quad \MDI(k) = \frac{1}{n_{tree}}\sum_{s = 1} ^{n_{tree}}MDI(k, \tree_s), \label{Eq:Gini_original}
\end{align}
respectively. This expression is the best known formula for MDI and was analyzed in many papers such as Louppe et al. \cite{Louppe2013}.

\subsection{Finite sample bias of MDI importance for Random Forests}
Given the set $S$ of relevant features and a tree $\tree$, we denote
\begin{equation}
    G_0(\tree) = \sum_{k\notin S}\MDI(k, \tree)
\end{equation}
as the sum of MDI importance of all noisy features. Ideally, $G_0(\tree)$ should be close to zero with high probability, to ensure that no noisy features get selected when using MDI importance for feature selection. In fact, Louppe et al. \cite{Louppe2013} show that $G_0(\tree)$ is indeed zero almost surely if we grow totally randomized trees with infinite samples.
However, $G_0(\tree)$ is typically non-negligible in real data, and finite sample properties of $G_0(\tree)$ are not well understood. In order to bridge this gap, we conduct a non-asymptotic analysis of the expected value of $G_0(\tree)$. Our main result characterizes how
the expected value of $G_0(\tree)$ depends on $m_n$, the minimum leaf size of $\tree$, and $\dim$, the dimension of the feature space. 
We start with the following simple but important fact.
\begin{fact}
\label{fact:Gini_always_increase}
If $\tree'$ is a sub-tree of $\tree$, then $\MDI(k, \tree') \leq \MDI(k, \tree)$ for any feature $X_k$.
\end{fact}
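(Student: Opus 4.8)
The plan is to reduce this inequality to two elementary observations: pruning can only \emph{remove} inner nodes from the index set defining MDI, and every summand in the MDI formula \eqref{Eq:Gini_original} is nonnegative. The first step is purely structural. If $\tree'$ is a sub-tree of $\tree$, then $I(\tree') \subseteq I(\tree)$: a node that has two children in $\tree'$ had those same two children in $\tree$, since pruning only deletes subtrees and relabels their roots as leaves, never adding edges. Moreover, for every node $\node$ that survives as an inner node of $\tree'$, nothing about it has changed — the cell $R_\node$, the counts $N_n(\node), N_n(\node^\lc), N_n(\node^\rc)$, the chosen split $(v(\node), z(\node))$, and hence $\Impurity(\node)$ and $\decrease(\node)$ all keep exactly their values from $\tree$, because no re-fitting takes place. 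Consequently $\MDI(k,\tree') = \sum_{\node \in I(\tree'),\, v(\node)=k} \frac{N_n(\node)}{n}\decrease(\node)$ is literally a sub-sum of the corresponding expression for $\tree$, obtained by discarding the terms indexed by the pruned nodes and the descendants of pruned nodes whose split variable equals $k$.

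The second step is to check that each discarded term is nonnegative, i.e. $\decrease(\node) \ge 0$ for every inner node $\node$. For variance impurity this is the finite-sample law of total variance: writing $w_\lc = N_n(\node^\lc)/N_n(\node)$, $w_\rc = N_n(\node^\rc)/N_n(\node)$, and using $\mu_n(\node) = w_\lc \mu_n(\node^\lc) + w_\rc \mu_n(\node^\rc)$, a direct expansion gives
\begin{equation*}
\decrease(\node) = w_\lc\big(\mu_n(\node^\lc) - \mu_n(\node)\big)^2 + w_\rc\big(\mu_n(\node^\rc) - \mu_n(\node)\big)^2 \;\ge\; 0.
\end{equation*}
Since $N_n(\node)/n \ge 0$ as well, dropping a subset of nonnegative summands can only decrease the total, which yields $\MDI(k,\tree') \le \MDI(k,\tree)$. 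Equivalently, one can induct on the number of pruned nodes, each step deleting one subtree's worth of nonnegative contributions.

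The only point that needs genuine care — the "hard part," such as it is — is the bookkeeping in the first step: confirming that turning an internal node into a leaf leaves the split decisions, sample counts, and impurities at all \emph{retained} ancestors and siblings untouched, so that the surviving summands are genuinely identical rather than merely analogous. Once that is pinned down the inequality is immediate. Note that the argument uses only the inequality $\decrease(\node)\ge 0$, so it carries over verbatim to the Gini-index and entropy impurities, for which the analogous concavity inequality also holds.
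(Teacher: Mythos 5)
Your proof is correct and follows the same route as the paper: the paper dispatches this fact in one sentence by observing that $\decrease(\node)\ge 0$ for every inner node, so that pruning only removes nonnegative summands from the sum in \eqref{Eq:Gini_original}. You have simply filled in the two details the paper leaves implicit — that $\MDI(k,\tree')$ is literally a sub-sum of $\MDI(k,\tree)$ with all retained terms unchanged, and the explicit variance-decomposition identity showing each term is nonnegative (the same identity the paper later derives inside the proof of Lemma \ref{lemma:singletailbound}).
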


This fact naturally follows from the observation that by definition, $\Delta_{\cI}(\node)\geq 0$ for any node $\node$. Since the impurity decrease at each node is guaranteed to be non-negative, $G_0(\tree)$ will never decrease as $\tree$ grows deeper, in which case the minimum leaf size $m_n$ will be smaller. Indeed, if $\tree$ is grown to purity ($m_n = 1$), and all features are noisy ($S = \emptyset$), then $G_0(\tree)$ would simply be equal to the sample variance of the responses in the data $\data^{(\tree)}$. How fast does $G_0(\tree)$ increase as the minimum leaf size $m_n$ becomes smaller?
To quantify the relation between $G_0(T)$ and $m_n$, we need a few mild conditions which we now describe.
Let
\begin{equation}
    y_i = \phi(\bx_{i, S}) + \epsilon_i, i= 1, \dots, n
\end{equation}
for some unknown function $\phi: \R^{|S|} \to \R$, where $\epsilon_i$ are i.i.d zero-mean Gaussian noise. We make the following assumptions.

(A1) $X_k \sim \text{Unif}[0,1]$ for all $k \in [p]$. In addition, the noisy features $\{X_k, k \in [p]\backslash S\}$ are mutually independent, and independent of all relevant features. Here $S$ denotes the set of relevant features.

(A2) $\phi$ is bounded: $\sup_{\bx \in [0,1]^{|S|}} \,  |\phi(\bx)|\leq M$ for some $M>0$. 

The Assumptions (A1) and (A2) are weaker than the assumptions usually made when studying the statistical properties of RF. The marginal uniform distribution condition in (A1) is common in the RF literature \citep{Scornet2015}, and can be easily satisfied by transforming the features via its inverse CDF. Since we are interested in characterizing the MDI of noisy features, we do not require the relevant features to be independent of each other. We do require that noisy features are independent of relevant features, which is a limitation of Theorem \ref{Thm:bound} below. Correlated features are commonly encountered in practice and difficult for any feature selection method.

We now state our first main result which provides a non-asymptotic upper and lower bound for the expected value of the maximum of $G_0(\tree)$ over all tree $\tree$ with minimum leaf size $m_n$. 
 
\begin{theorem}\label{Thm:bound}
Let $\mathcal{\tree}_n(m_n)$ denote the set of decision trees whose minimum leaf size is lower bounded by $m_n$, and $\mathcal{\tree}_n(m_n, d_n) \subset \mathcal{\tree}_n(m_n)$ denote the subset of $\mathcal{\tree}_n(m_n)$ whose depth is upper bounded by $d_n$. Under Assumptions (A1) and (A2), there exists a positive constant $C$ such that, 
\begin{equation}
\label{eq:ginitailbound}
\E_{X, \epsilon}\sup_{\tree \in \mathcal{\tree}_n(m_n, d_n)} G_0(\tree) \leq C\frac{d_n\log(np)}{m_n}.
\end{equation}
In addition, when $f=0$ and $m_n \geq 36\log p + 18 \log n$,
\begin{equation}
    \E_{X, \epsilon}\sup_{\tree \in \mathcal{\tree}_n(m_n)} G_0(\tree) \geq \frac{\log p}{C m_n}.
\end{equation}
\end{theorem}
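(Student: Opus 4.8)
The two bounds call for unrelated arguments: the upper bound follows from a concentration estimate applied level by level to the splits of an arbitrary admissible tree, and the lower bound from exhibiting a single deep tree that already attains the rate. I sketch each.

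\emph{Upper bound --- reduction.} The starting point is the elementary identity for variance impurity: if an inner node $\node$ with $N_n(\node)$ samples is split on $X_k$, then after reordering the samples of $R_\node$ by their value of $X_k$ and writing $S_j$ for the sum of the first $j$ reordered responses ($j$ the split position),
\[
\frac{N_n(\node)}{n}\,\decrease(\node)=\frac{N_n(\node^{\lc})N_n(\node^{\rc})}{n\,N_n(\node)}\bigl(\pred(\node^{\lc})-\pred(\node^{\rc})\bigr)^2=\frac1n\cdot\frac{N_n(\node)\,\bigl(S_j-j\,\pred(\node)\bigr)^2}{j\,(N_n(\node)-j)} .
\]
If $X_k$ is a noisy feature, Assumption (A1) makes this reordering, conditionally on all other coordinates and on $\epsilon$ (hence on the responses $y_i=\phi(\bx_{i,S})+\epsilon_i$), a uniformly random permutation that is independent of the $y_i$; so $j\mapsto S_j-j\,\pred(\node)$ is a mean-zero ``discrete bridge'' obtained by sampling the responses in $R_\node$ without replacement. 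Since both children contain more than $m_n$ samples, $N_n(\node)/(j(N_n(\node)-j))=1/j+1/(N_n(\node)-j)\le 2/m_n$, and therefore
\[
\frac{N_n(\node)}{n}\,\decrease(\node)\;\le\;\frac{2}{n\,m_n}\,\max_{j}\bigl(S_j-j\,\pred(\node)\bigr)^2 .
\]
A sub-Gaussian maximal inequality for the bridge (Hoeffding/Serfling plus Doob, after conditioning on $\{\max_i|\epsilon_i|\le c\sigma\sqrt{\log n}\}$ to bound the responses; the complement contributes negligibly to $\E_{X,\epsilon}$ because there $G_0(\tree)\le\Impurity(\text{root})$ and Gaussian tails are summable) then gives, for a fixed noisy feature and a fixed node sample set, $\max_j(S_j-j\,\pred(\node))^2\lesssim N_n(\node)\,\Impurity(\node)\,\log(1/\delta)$ with probability $1-\delta$.

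\emph{Upper bound --- assembly.} Combining the two displays, every noisy inner node obeys $\tfrac{N_n(\node)}{n}\decrease(\node)\lesssim \tfrac{L}{m_n}\cdot\tfrac{N_n(\node)}{n}\Impurity(\node)$ on a good event, where $L$ is the logarithmic price of the relevant union bound. One then sums over the noisy inner nodes of an arbitrary $\tree\in\mathcal{\tree}_n(m_n,d_n)$, grouped by tree level: the nodes on a common level partition a subset of the data, so the law of total variance gives $\sum_{\node\text{ at level }\ell}\tfrac{N_n(\node)}{n}\Impurity(\node)\le\Impurity(\text{root})\lesssim M^2+\sigma^2$, and with at most $d_n$ levels the sum is $\lesssim d_n L(M^2+\sigma^2)/m_n$. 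The delicate point is $L$: a blunt union bound over all sample subsets that can occur as a depth-$\le d_n$ node costs $\log(\#)\asymp d_n\log(np)$ --- each such node is cut out by $\le d_n$ axis-aligned constraints, and there are $\le 2p(n+1)$ of those --- which would inflate the rate by a spurious factor $d_n$. To obtain the stated linear dependence one localizes: condition on the tree above level $\ell$ so that only the $O(n/m_n)$ actual level-$\ell$ nodes need to be controlled, union over the $\le p$ features and $\le d_n$ levels separately, and get $L\asymp\log(np)$. Reconciling such a level-by-level conditioning with the supremum over \emph{all} trees (not a fixed data-dependent one) is, I expect, where most of the technical work sits, and also where $C$ absorbs $M^2+\sigma^2$ and the numerical constants.

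\emph{Lower bound.} Here $\phi\equiv0$, so $y_i=\epsilon_i$ are i.i.d.\ Gaussian, every feature is noisy, and $G_0(\tree)=\Impurity(\text{root})-\sum_{\text{leaves }\ell}\tfrac{N_n(\ell)}{n}\Impurity(\ell)$ is exactly the total impurity decrease of $\tree$. I would bound $\E\sup_{\tree\in\mathcal{\tree}_n(m_n)}G_0(\tree)$ below by the value of one explicit ``caterpillar'': beginning at the root and repeating for about $n/(2m_n)$ steps, split the current node (of size $N\ge n/2$) to peel off a leaf of size $\asymp m_n$, choosing, among the $p$ features, the one whose $m_n$ smallest within-node values isolate the block of responses with the most extreme centered sum. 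A single such split captures a decrease $(S_{m_n}-m_n\,\pred(\node))^2/(m_n(N-m_n))$, and since the within-node order of a feature not yet used is conditionally a uniform random permutation of the node, independent of its responses, $S_{m_n}-m_n\,\pred(\node)$ is an approximately $N(0,\Theta(m_n\sigma^2))$ variable; an anti-concentration bound for the maximum of $p$ such quantities, independent across features, gives $(S_{m_n}-m_n\,\pred(\node))^2\gtrsim \sigma^2 m_n\log p$ with high probability. The hypothesis $m_n\ge 36\log p+18\log n$ is exactly what lets these $\Theta(n/m_n)$ events hold simultaneously after a union bound over the caterpillar's nodes and the samples while leaving $\log p$ the dominant term; restricting to the first $n/(2m_n)$ peels also keeps all node variances $\Theta(\sigma^2)$. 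Each peeled split thus adds $\tfrac{N}{n}\cdot\tfrac{\sigma^2 m_n\log p}{m_n N}\asymp\tfrac{\sigma^2\log p}{n}$ to $G_0$, and telescoping the $\asymp n/m_n$ of them gives $G_0\gtrsim\sigma^2\log p/m_n$, which after taking expectations and folding $\sigma^2$ into $C$ is the claim.
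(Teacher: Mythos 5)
Your per\-/node concentration step is essentially the paper's Lemma~\ref{lemma:singletailbound}: reorder the samples in the node by the noisy coordinate so that, by (A1), the responses become conditionally i.i.d., write $\decrease(\node)=p^{\lc}p^{\rc}(\mu^{\lc}-\mu^{\rc})^2$, and apply a sub-Gaussian maximal inequality over the $N_n(\node)$ split positions. The gap is in the assembly. As you yourself observe, your per-node bound $\frac{N_n(\node)}{n}\decrease(\node)\lesssim\frac{1}{nm_n}\max_j(S_j-j\mu)^2\lesssim\frac{N_n(\node)L}{nm_n}$ together with the honest union bound $L\asymp d_n\log(np)$ and the level decomposition delivers $d_n^2\log(np)/m_n$, one factor of $d_n$ too many; and the proposed fix --- conditioning level by level so that only the realized level-$\ell$ nodes need control --- does not survive the supremum over \emph{all} trees in $\mathcal{\tree}_n(m_n,d_n)$, since the level-$\ell$ nodes are not a fixed collection of $O(n/m_n)$ rectangles but range over the full $(np)^{O(\ell)}$-sized class, so there is nothing to localize to. The paper keeps the blunt union bound over all $\le(np)^{d+1}$ candidate rectangles and instead removes the spurious $d_n$ at the per-node level: it bounds the probability of $\{\Delta_{\cI}(A,k)\ge m_n\delta/N_n(A)\}$, which by the lemma is $\le 4N_n(A)e^{-\delta m_n/(4(M+1)^2)}$ \emph{uniformly} in $N_n(A)\ge m_n$; on the complementary good event each inner node contributes $\frac{N_n(\node)}{n}\cdot\frac{m_n\delta}{N_n(\node)}=\frac{m_n\delta}{n}$ to $G_0$, and since a tree with minimum leaf size $m_n$ has at most $n/m_n$ inner nodes, $G_0(\tree)\le\delta$ with $\delta\asymp d_n\log(np)/m_n$. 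In your notation the missing step is to keep the $1/j+1/(N-j)$ normalization inside the maximal inequality, i.e.\ bound $(S_j-j\mu)^2/j$ and $(S_N-S_j-(N-j)\mu)^2/(N-j)$ by $O(L)$ directly, rather than first discarding it via $N/(j(N-j))\le 2/m_n$ and then controlling $\max_j(S_j-j\mu)^2$; this yields $\frac{N_n(\node)}{n}\decrease(\node)\lesssim L/n$ per node and makes the level decomposition (and hence the localization) unnecessary.

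For the lower bound your strategy --- exhibit one admissible tree and extract $\log p$ from a maximum over the $p$ noisy features --- is the paper's, but the caterpillar construction has two unaddressed dependence problems. First, the $p$ candidate block sums at a given node are built from the same pool of responses, and after each peel the retained responses are conditioned on all previous maximizing choices, so the claim that the centered block sums are ``approximately $N(0,\Theta(m_n\sigma^2))$ and independent across features'' needs an argument and degrades as peels accumulate; second, with $\asymp n/m_n$ peels and only $p$ features you cannot always split on a ``feature not yet used''. The paper's tree avoids both issues: it splits deterministically on $X_1$ at midpoints for the first $d_n-1$ levels (legitimate since $\phi\equiv 0$ makes $X_1$ noisy too), so that conditional on cell memberships the $2^{d_n-1}$ cells carry independent i.i.d.\ Gaussian response blocks of size $\asymp m_n$; it then performs a single data-driven split per cell at $z=1/2$ over the remaining features, computes the covariance of $(\eta_2,\dots,\eta_p)$ explicitly, and applies Sudakov--Fernique to get $\E\max_k\eta_k\gtrsim\sqrt{\log p/m_n}$ per cell, summing to $\log p/(Cm_n)$. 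Note also that the hypothesis $m_n\ge 36\log p+18\log n$ is used there only to make the cell-size concentration events $B_1\cap B_2$ hold with probability at least $8/9$, not, as you suggest, to run a union bound over caterpillar nodes and samples.
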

We give the proof in the Appendix. To the best of our knowledge, Theorem \ref{Thm:bound} is the first non-asymptotic result on the expected MDI importance of tree ensembles. In particular, the upper bound can be directly applied to \emph{any} tree ensembles with a minimum leaf size $m_n$ and a maximum tree depth $d_n$, including Breiman's original RF procedure, if subsampling is used instead of bootstrapping.


\textit{Proof Sketch.} 
Every node $\node$ in a tree $\tree\in\mathcal{\tree}_n(m_n, d_n)$ corresponds to an axis-aligned hyper-rectangle in $[0,1]^p$ which contains at least $m_n$ samples and is formed by splitting on at most $d_n$ dimensions consecutively. Therefore, bounding the supremum of impurity reduction for any potential node in $\mathcal{\tree}_n(m_n, d_n)$ boils down to controlling the complexity of all such hyper-rectangles. Two hyper-rectangles are considered equivalent if they contain the same subset of samples, since the impurity reductions of these two hyper-rectangles are always the same. Up to this equivalence, it can be proved that the number of unique hyper-rectangles of interest is upper bounded by $(n\dim)^{d_n}$, which corresponds to the $d_nlog(n\dim)$ term in the upper bound. The final result is obtained via union bound. \qed

In the upper bound, each node $\node$ is obtained by splitting on at most $d_n$ features.
In practice, $d_n$ is typically at most of order $\log n$. Indeed, if the decision tree is a balanced binary tree, then $d_n \leq \log_2 n$. Therefore, for balanced trees, the upper bound can be written as
\begin{equation}
\E_{X, \epsilon}\sup_{\tree \in \mathcal{\tree}_n(m_n, d_n)} G_0(\tree) \leq C\frac{d_n\log(np)}{m_n} \leq C\frac{(\log n)^2 + \log n\log p}{m_n},
\end{equation}
and the theorem shows that the sum of MDI importance of noisy features is of order $\frac{\log p}{m_n}$, i.e.,
\begin{equation}\sup_{\phi:\|\phi\|_{\infty}\leq M}\E_{X, \epsilon}\sup_{\tree \in \mathcal{\tree}_n(m_n)} G_0(T) \sim \frac{\log p}{m_n},
\end{equation}
up to a $\log n$ term correction, which is typically small in the high dimensional $p\gg n$ setting. If all features $X_j$ are categorical with a bounded number of categories, then the upper bound can be improved to
\begin{equation}
\E_{X, \epsilon}\sup_{\tree \in  \mathcal{\tree}_n(m_n, d_n)} G_0(\tree) \leq C\frac{d_n\log p}{m_n},
\end{equation}
which shows that the MDI importance of noisy features can be better controlled if the noisy features are categorical rather than numerical. That is consistent with the previous empirical studies because the number of candidate split points for a numerical feature is larger than that for a categorical feature.

Theorem \ref{Thm:bound} shows that the supremum of MDI importance of noisy features over all trees with minimum leaf size $m_n$ is, in expectation, roughly inversely proportional to $m_n$.  In the Appendix Fig. \ref{Fig:sim1d}, we show that the inversely proportional relationship is consistent with the empirical $G_0(T)$ on a simulated dataset described in the first simulation study in Section \ref{Sec:numerical_study}.  Therefore, to control the finite sample bias of MDI importance, one should either grow shallow trees, or use only the shallow nodes in a deep tree when computing the feature importance. In fact, since $G_0(T)$ depends on the dimension $\dim$ only through a log factor $\log p$, we expect $G_0(T)$ to be very small even in a high-dimensional setting if $m_n$ is larger than, say, $\sqrt{n}$. For a balanced binary tree grown to purity with depth $d_n = \log_2 n$, this corresponds to computing MDI only from the first $d_n/2 = (\log_2 n)/2$ levels of the tree, as the node size on the $d$th level of a balanced tree is $n/2^d$.

Fact \ref{fact:Gini_always_increase} implies that the MDI importance of relevant features might also decrease as $m_n$ increases, but we will show in simulation studies that they will decrease at a much slower rate, especially when the underlying model is sparse.

\section{MDI using out-of-bag samples (MDI-oob)} \label{Sec:debaised_Gini_importance}
As shown in the previous section, for balanced trees, the sum of MDI feature importance of all noisy features is of order $\frac{\log(p)}{m_n}$ if we ignore the $\log(n)$ terms. 
This means that
the MDI feature selection bias becomes severe for trees with smaller leaf size $m_n$, which usually corresponds to a deeper tree. Fortunately, this bias can be corrected by evaluating MDI using out-of-bag samples. In this section, we first introduce a new analytical expression of MDI as the motivation of our new method, then we propose the MDI-oob as a new feature importance measure. For simplicity, in this section, we only focus on one tree $\tree$. However, all the results are directly applicable to the forest case.
\subsection{A new characterization of MDI}
Recall that the original definition of the MDI importance of any feature $k$ is provided in Equation \eqref{Eq:Gini_original}, that is, the sum of impurity decreases among all the inner nodes $\node$ such that $v(\node) = k$. Although we can use this definition to analyze the feature selection bias of MDI in Theorem \ref{Thm:bound}, this expression \eqref{Eq:Gini_original} gives us few intuitions on how we can get a new feature importance measure that reduces the MDI bias. Next, we derive a novel analytical expression of MDI, which shows that the MDI of any feature $k$ can be viewed as the sample covariance between the response $y_i$ and the function $f_{\tree,k}(\bx_i)$ defined in Proposition \ref{Prop:Gini_interpretation}. This new expression inspires us to propose a new MDI feature importance measure by using the out-of-bag samples.

\begin{proposition}\label{Prop:Gini_interpretation}
Define the function $f_{\tree, k}(\cdot)$ to be
\[
f_{\tree, k}(X) = \sum_{\node \in I(\tree):v(\node) = k} \Big\{ \pred(\node^\lc)\1(X\in R_{\node^\lc}) + \pred(\node^\rc)\1(X\in R_{\node^\rc}) - \pred(\node)\1(X\in R_{\node})\Big\} .
\]
Then the MDI of the feature $k$ in a tree $\tree$ can be written as:
\begin{align}\label{Eq:Gini_variant}
\frac{1}{|\data^{(\tree)}|}\sum_{i\in \data^{(\tree)}} f_{\tree,k}(\bx_i)\cdot y_i,
\end{align}
\end{proposition}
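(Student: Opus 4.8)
The plan is to verify the stated identity one inner node at a time. I would show that for every $\node\in I(\tree)$ with $v(\node)=k$, the summand $N_n(\node)\decrease(\node)$ appearing (up to the overall $1/|\data^{(\tree)}|$ normalization) in \eqref{Eq:Gini_original} equals $\sum_{i\in\data^{(\tree)}} g_\node(\bx_i)y_i$, where $g_\node(X):=\mu_n(\node^\lc)\1(X\in R_{\node^\lc})+\mu_n(\node^\rc)\1(X\in R_{\node^\rc})-\mu_n(\node)\1(X\in R_{\node})$ is the single bracketed term of $f_{\tree,k}$ attached to $\node$; summing over such nodes and dividing by $|\data^{(\tree)}|$ then turns \eqref{Eq:Gini_original} into \eqref{Eq:Gini_variant}. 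Two structural facts are used throughout: the samples of $\data^{(\tree)}$ lying in $R_{\node^\lc}$ and those lying in $R_{\node^\rc}$ form a partition of those lying in $R_\node$ (so $N_n(\node)=N_n(\node^\lc)+N_n(\node^\rc)$, and additive quantities over $R_\node$ split over the two children), and, by \eqref{eq:mun}, $\sum_{i\in\data^{(\tree)}:\ \bx_i\in R_\node} y_i=N_n(\node)\mu_n(\node)$, with all sums counted with multiplicity when $\data^{(\tree)}$ is a bootstrap sample.

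First I would rewrite the impurity decrease at $\node$ purely in terms of node averages, using the elementary identity $\sum_{i:\bx_i\in R_\node}(y_i-\mu_n(\node))^2=\sum_{i:\bx_i\in R_\node}y_i^2-N_n(\node)\mu_n(\node)^2$ (and its analogues for $\node^\lc$ and $\node^\rc$) together with the partition property to cancel the $\sum y_i^2$ contributions; this gives $N_n(\node)\decrease(\node)=N_n(\node^\lc)\mu_n(\node^\lc)^2+N_n(\node^\rc)\mu_n(\node^\rc)^2-N_n(\node)\mu_n(\node)^2$. On the other side, $\sum_{i\in\data^{(\tree)}}g_\node(\bx_i)y_i=\mu_n(\node^\lc)\sum_{i:\bx_i\in R_{\node^\lc}}y_i+\mu_n(\node^\rc)\sum_{i:\bx_i\in R_{\node^\rc}}y_i-\mu_n(\node)\sum_{i:\bx_i\in R_\node}y_i$, and substituting $\sum_{i:\bx_i\in R_\node}y_i=N_n(\node)\mu_n(\node)$ (and likewise for the children) makes this equal to the very same expression $N_n(\node^\lc)\mu_n(\node^\lc)^2+N_n(\node^\rc)\mu_n(\node^\rc)^2-N_n(\node)\mu_n(\node)^2$. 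Hence the per-node contributions match; summing over $\node\in I(\tree)$ with $v(\node)=k$ and dividing by $|\data^{(\tree)}|$ proves the proposition.

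There is essentially no obstacle here — the statement is a bookkeeping identity rather than a theorem requiring a new idea. The only two points that need care are the mean-square decomposition, which is precisely what makes the quadratic-in-$y$ terms cancel across each parent/child split so that only the node averages survive, and the convention that all sums over $\data^{(\tree)}$ respect sample multiplicities, which is already built into the definitions of $N_n(\cdot)$ and $\mu_n(\cdot)$. One can also note in passing that $\sum_{i\in\data^{(\tree)}}f_{\tree,k}(\bx_i)=0$ (each $g_\node$ has zero sample sum by the same partition identity), so that \eqref{Eq:Gini_variant} is genuinely the sample covariance of $y$ and $f_{\tree,k}$, as claimed in the surrounding text.
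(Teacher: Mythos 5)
Your proof is correct and follows essentially the same route as the paper's: a node-by-node verification that $\frac{N_n(\node)}{|\data^{(\tree)}|}\decrease(\node)$ equals the corresponding bracketed term of $f_{\tree,k}$ paired with $y_i$, followed by summation over the inner nodes splitting on feature $k$. The only (immaterial) difference is organizational: the paper transforms the squared-residual expression directly into the final form via a difference-of-squares factoring together with the zero-sum property of within-child residuals and the relation $N_n(\node^\lc)\mu_n(\node^\lc)+N_n(\node^\rc)\mu_n(\node^\rc)=N_n(\node)\mu_n(\node)$, whereas you reduce both sides independently to the common intermediate $N_n(\node^\lc)\mu_n(\node^\lc)^2+N_n(\node^\rc)\mu_n(\node^\rc)^2-N_n(\node)\mu_n(\node)^2$ via the variance shortcut formula.
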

We give the proof in the Appendix. The proof is just a few lines but it requires a good understanding of MDI. Although we have not seen this analytical expression in the prior works, we found that the functions $f_{\tree,k}(\cdot)$ have been studied from a quite different perspective. Those functions were first proposed in Saabas \cite{Saabas2014} to interpret the RF predictions for each individual sample. According to this paper, $f_{\tree, k}$ can be viewed as the "contribution" made by the feature $k$ in the tree $\tree$. For any tree, those functions $f_{\tree, k}$ can be easily computed using the python package \emph{treeinterpreter}. 

It can be shown that $\sum_{i\in \data^{(\tree)}}f_{\tree,k}(\bx_i) = 0.$ That implies $\frac{1}{|\data^{(\tree)}|}\sum_{i\in \data^{(\tree)}} f_{\tree,k}(\bx_i)\cdot y_i$ is essentially the sample covariance between $f_{\tree,k}(\bx_i)$ and $y_i$ on the bootstrapped dataset $\data^{(\tree)}$. This indicates a potential drawback of MDI: RFs use the training data $\data^{(\tree)}$ to construct the functions $f_{\tree,k}(\cdot)$, then MDI uses the same data to evaluate the covariance between $y_i$ and $f_{\tree, k}(\bx_i)$ in Equation \eqref{Eq:Gini_variant}.

\paragraph{Remark:}
So far we have only considered regression trees, and have defined the impurity at a node $\node$ using the sample variance of responses. For classification trees, one may use Gini index as the measure of impurity. We point out that these two definitions of impurity are actually equivalent when we use a one-hot vector to represent the categorical response. Therefore, our results are directly applicable to the classification case.
Suppose that $Y$ is a categorical variable which can take $D$ values $c_1, c_2, \dots, c_D$. Let $p_d = \PP(Y = c_d)$. Then the Gini index of $Y$ is $\Gini(Y) = \sum_{d=1}^D p_d(1-p_d).$
We define the one-hot encoding of $Y$ as a $D$-dimensional vector $\tilde{Y} = (\1(Y = c_1), \dots, \1(Y = c_D))$. Then
\begin{equation}
    \var(\tilde{Y}) = \|\tilde{Y} - \E\tilde{Y}\|_2^2 = \sum_{d=1}^D (\E\tilde{Y}_i^2 - (\E\tilde{Y}_i)^2) = \sum_{d=1}^D (\E\tilde{Y}_i - (\E\tilde{Y}_i)^2) = \sum_{d=1}^D p_d(1-p_d) = \Gini(Y),
\end{equation}
thereby showing that Gini index and variance are equivalent.


\subsection{Evaluating MDI using out-of-bag samples}
Proposition \ref{Prop:Gini_interpretation} suggests that we can calculate the covariance between $y_i$ and $f_{\tree, k}(\bx_i)$ in Equation \eqref{Eq:Gini_variant}
using the out-of-bag samples $\data \backslash \data^{(T)}$: 
\begin{align}\label{Eq:mdi-oob}
\text{MDI-oob of feature $k$} = & \frac{1}{|\data \backslash \data^{(\tree)}|}\sum_{i\in \data \backslash \data^{(\tree)}} f_{\tree,k}(\bx_i)\cdot y_i.
\end{align}
In other words, for each tree, we calculate the $f_{T,k}(\bx_i)$ for all the OOB samples $\bx_i$ and then compute MDI-oob using \eqref{Eq:mdi-oob}. Although out-of-bag samples have been used for other feature importance measures such as MDA, to the best of the authors' knowledge, there are few results that use the out-of-bag samples to evaluate MDI feature importance. A naive way of using the out-of-bag samples to evaluate MDI is to directly compute the impurity decrease at each inner-node of a tree using OOB samples. However, this approach is not desirable since the impurity decrease at each node is still always positive unless the responses of all the OOB samples falling into a node are constant. In this case, an argument similar to the proof of Theorem 1 can show that the bias of directly computing impurity using OOB samples could still be large for deep trees. 
The idea of MDI-oob depends heavily on the new analytical MDI expression. Without the new expression, it is not clear how one can use out-of-bag samples to get a better estimate of MDI. 
One highlight of the MDI-oob is its low computation cost. The time complexity of evaluating MDI-oob for RFs is roughly the same as computing the RF predictions for $|\data \backslash \data^{(\tree)}|$ number of samples. 


\section{Simulation experiments}\label{Sec:numerical_study}

\subsubsection*{Simulated study on the effect of minimum leaf size and the tree depth}
In this simulation\footnote{\tiny The source code is available at \url{https://github.com/shifwang/paper-debiased-feature-importance}}, we investigate the empirical relationship between MDI importance and the minimum leaf size.
To mimic the major experiment setting in the paper \cite{Strobl2007b}, we generate the data as follows. We sample $n=200$ observations, each containing 5 features. The first feature is generated from standard Gaussian distribution. The second feature is generated from a Bernoulli distribution with $p = 0.5$. The third/fourth/fifth features have 4/10/20 categories respectively with equal probability of taking any states. The response label y is generated from a Bernoulli distribution such that 
$P(y_i = 1) = (1 + x_{i2}) / 3.$ While keeping the number of trees to be $300$, we vary the minimum leaf size of RF from 1 to 50 and record the MDI of every feature. The results are shown in Fig. \ref{Fig:sim1a}.
We can see from this figure that the MDI of noisy features, namely X1, X3, X4 and X5, drops significantly when the minimum leaf size increases from 1 to 50. This observation supports our theoretical result in Theorem \ref{Thm:bound}. 
Besides the minimum leaf size, we also investigate the relationship between MDI and the tree depth. As tree depth increases, the minimum leaf size generally decreases exponentially. Therefore, we expect the MDI of noisy features to become larger for increasing tree depth. We vary the maximum depth from 1 to 20 and record the MDI of every feature. The results shown in Fig. \ref{Fig:sim1b} are consistent with our expectation. MDI importance of noisy features increase when the tree depth increases from 1 to 20. Fig. \ref{Fig:sim1c} shows the MDI-oob measure and it indeed reduces the bias of MDI in this simulation. 
\begin{figure}
\begin{minipage}{.33\textwidth}
    \centering
    \includegraphics[width=.8\textwidth]{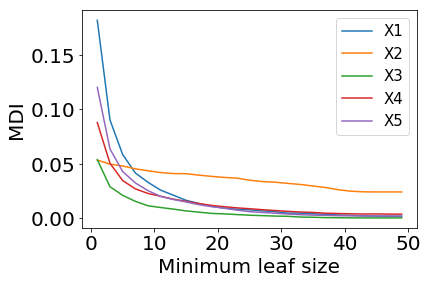}
    \vspace{-3mm}
    \caption{MDI against min leaf size.}
    \label{Fig:sim1a}
\end{minipage}
\begin{minipage}{.33\textwidth}
    \centering
    \includegraphics[width=.8\textwidth]{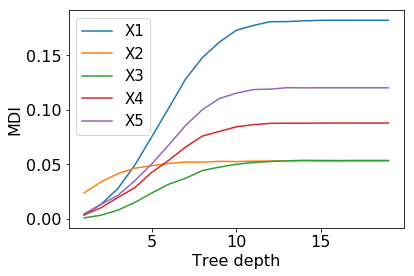}
    \vspace{-3mm}
    \caption{MDI against tree depth.}
    \label{Fig:sim1b}
\end{minipage}
\begin{minipage}{.33\textwidth}
    \centering
    \includegraphics[width=.8\textwidth]{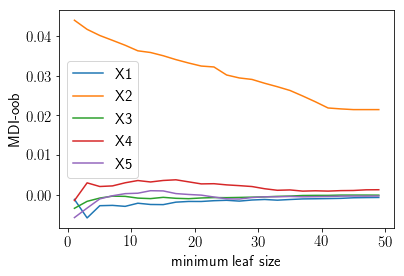}
    \vspace{-3mm}
    \caption{MDI-oob against min leaf size.}
    \label{Fig:sim1c}
\end{minipage}
\end{figure}
\subsection*{Noisy feature identification using the simulated data}
In this experiment, we evaluate different feature importance measures in terms of their abilities to identify noisy features in a simulated data set. We compare our method with the following methods: MDA, cforest in the R package \texttt{party}, SHAP\cite{Lundberg2018}, default feature importance (MDI) in \texttt{scikit-learn}, the impurity corrected Gini importance in the R package \texttt{ranger}, UFI in \cite{zhou2019unbiased}, and naive-oob, which refers to the naive method that evaluates impurity decrease at each node  using out-of-bag samples directly. 
To evaluate feature importance measures, we generate the following simulated data. Inspired by the experiment settings in Strobl et al. \cite{Strobl2007b}, our setting involves discrete features with different number of distinct values, which poses a critical challenge for MDI. The data has 1000 samples with 50 features. All features are discrete, with the $j^{th}$ feature containing $j+1$ distinct values $0, 1, \ldots, j$. We randomly select a set $S$ of 5 features from the first ten as relevant features. The remaining features are noisy features. Choosing active features with fewer categories represents the most challenging case for MDI. All samples are i.i.d. and all features are independent.
We generate the outcomes using the following rules:
\begin{itemize}
    \item Classification: $
P(Y = 1 | X) = \textrm{Logistic}(\frac{2}{5}\sum_{j \in S} X_j/j - 1).$
\item Regression:
$Y = \frac{1}{5}\sum_{j\in S} X_j/j + \epsilon,$
where $\epsilon\sim \mathcal N(0, 100\cdot \var(\frac{1}{5}\sum_{j\in S}X_j/j))$.
\end{itemize} 
Treating the noisy features as label 0 and the relevant features as label 1, we can evaluate a feature importance measure in terms of its area under the receiver operating characteristic curve (AUC). Note that when a feature importance measure gives low importance to relevant features, its AUC score measure can be smaller than 0.5 or even 0. 
We grow 100 trees with the minimum leaf size set to either 100 (shallow tree case) or 1 (deep tree case). The number of candidate features $m_{try}$ is set to be 10. 
We repeat the whole process  40 times and report the average AUC scores for each method in Table \ref{Tab:results}. The boxplots For this simulated setting, MDI-oob achieves the best AUC score under all cases.  
\subsubsection*{Noisy feature identification using a genomic ChIP dataset}

To evaluate our method MDI-oob in a more realistic setting, we consider a ChIP-chip and ChIP-seq dataset measuring the enrichment of 80 biomolecules at 3912 regions of the \textit{Drosophila} genome \cite{celniker2009unlocking, macarthur2009developmental}. These data have previously been used in conjunction with RF-based methods, namely iterative random forests (iRF)  \cite{Basu2017}, to predict functional labels associated with genomic regions. They provide a realistic representation of many issues encountered in practice, such as heterogeneity and dependencies among features, which make it especially challenging for feature selection problems.
To evaluate feature selection in the ChIP data, we scale each feature $X_j$ to be between 0 and 1. Second, we randomly select a set $S$ of 5 features as relevant features and include the rest as noisy features. We randomly permute values of any noisy features to break their dependencies with relevant features.  By this means, we avoid the cases where RFs "think" some features are important not because they themselves are important but because they are highly correlated with other relevant features. Then we generate responses using the following rules:
\begin{itemize}
    \item Classification:$
P(Y = 1 | X) = \textrm{Logistic}(\frac{2}{5}\sum_{j \in S} X_j - 1)$.
\item Regression:
$Y = \frac{1}{5}\sum_{j\in S} X_j + \epsilon,$
where $\epsilon\sim \mathcal N(0, 100\cdot \var(\frac{1}{5}\sum_{j\in S}X_j))$.
\end{itemize} 
All the other settings remain the same as the previous simulations. We report the average AUC scores for each method in Table \ref{Tab:results}. The standard errors and the beeswarm plots of all the methods are included in the Appendix. 
 Naive-oob, namely, the method that directly computes MDI using the out-of-bag samples is hardly any better than the original gini importance. MDI-oob or UFI usually achieves the best AUC score in three out of four cases, except for shallow regression trees, when all methods appear to be equally good with AUC scores close to 1. Although UFI and MDI-oob use out-of-bag samples in different ways, their results are generally comparable. We also note that increasing the minimum leaf size consistently improves the AUC scores of all methods. 
 
 Another observation is that MDA behaves poorly in some simulations despite its use of a validation set. This could be due to the low signal-to-noise ratio in the simulation setting. After we train the RF model on the training set, we evaluated the model’s accuracy on a test set. It turns out that the accuracy of the model is quite low. In that case, MDA struggles because the accuracy difference between permuting a relevant feature and permuting a noisy feature is small. We observe that the MDA gets better when we increase the signal-to-noise ratio.

The computation time of different methods is hard to compare due to a few factors. Because the packages including \texttt{scikit-learn} and \texttt{ranger} compute feature importance when constructing the tree, it is hard to disentangle the time taken to construct the trees and the time taken to get the feature importance. Furthermore, different packages are implemented in different programming languages so it is not clear if the time difference is because of the algorithm or because of the language. We implement MDI-oob in Python and for our first simulated classification setting, MDI-oob takes $\sim$ 3.8 seconds for each run. To compare, \texttt{scikit-learn} which uses Cython (A C extension for Python) takes $\sim$ 1.4 seconds to construct the RFs for each run. Thus, MDI-oob runs in a reasonable time frame and we expect it to be faster if it is implemented in C or C++.



\begin{table}[htb]
\centering
\caption{Average AUC scores for noisy feature identification}\label{Tab:results}
\begin{tabular}{@{}lllllllll@{}}
\toprule
        & \multicolumn{4}{c}{Deep tree (min leaf size = 1)}                            & \multicolumn{4}{c}{Shallow tree(min leaf size = 100)}                         \\
        & \multicolumn{2}{c}{Simulated} & \multicolumn{2}{c}{ChIP} & \multicolumn{2}{c}{Simulated} & \multicolumn{2}{c}{ChIP} \\ 
        & C& R& C& R & C & R & C & R \\ \midrule
MDI-oob & \textbf{0.76}       & 0.52      & 0.87       & 0.98      & \textbf{0.75}       & \textbf{0.58}      & \textbf{0.94}       & 0.98      \\
UFI & 0.72       & \textbf{0.54}      & \textbf{0.88}       & \textbf{0.99}     & \textbf{0.75}       & 0.56      & \textbf{0.94}       & 0.98      \\
naive-oob &   0.18      &   0.10    &  0.67       &   0.71    &   0.60     &  0.39     &    0.89    &   0.97   \\
SHAP    & 0.55       & 0.33      & 0.82       & 0.96      & 0.68       & 0.46      & 0.91       & 0.97      \\
ranger  & 0.56       & 0.50      & 0.73       & 0.97      & 0.55       & 0.49      & 0.76       & \textbf{0.99}      \\
MDA     & 0.49       & 0.51      & 0.54       & 0.97      & 0.50       & \textbf{0.58}      & 0.50       & \textbf{0.99}      \\
cforest & 0.65       & 0.50      & 0.79       & 0.93      & 0.70       & 0.49      & 0.90       & 0.98      \\
MDI     & 0.12       & 0.09      & 0.60       & 0.71      & 0.63       & 0.40      & 0.88       & 0.97      \\ \bottomrule 
\multicolumn{9}{l}{\small "C" stands for classification, "R" stands for regression. The column maximum is bolded.}
\end{tabular}
\end{table}
\section{Discussion and future directions}\label{Sec:conclusion}
Mean Decrease Impurity (MDI) is widely used to assess feature importance and its bias in feature selection is well known. Based on the original definition of MDI, we show that its expected bias is upper bounded by an expression that is inversely proportional to the minimum leaf size under mild conditions, which means deep trees generally have a higher feature selection bias than shallow trees. 
To reduce the bias, we derive a new analytical expression for MDI and use the new expression to obtain MDI-oob. For the simulated data and a genomic ChIP dataset, MDI-oob has exhibited the state-of-the-art feature selection performance in terms of AUC scores. 

\emph{Comparison to SHAP.} SHAP originates from game theory and offers a novel perspective to analyze the existing methods. While it is desirable to have ‘consistency, missingness and local accuracy’, our analysis indicates that there are other theoretical properties that are also worth taking into account. As shown in our simulation, the feature selection bias of SHAP increases with the depth of the tree, and we believe SHAP can also use OOB samples to improve feature selection performance.

\emph{Relationship to honest estimation.}
Honest estimation is an important technique built on the core notion of sample splitting. It has been successfully used in causal inference and other areas to mitigate the concern of over-fitting in complex learners due to usage of same data in different stages of training. The proposed algorithm MDI-oob has important connections with "honest sampling" or "honest estimation". For example, in Breiman's 1984 book \citep{Breiman1984}, he proposed to use a separate validation set for pruning and uncertainty estimation. In \citep{Wager2018EstimationForests}, each within-leaf prediction is estimated using a different sub-sample (such as OOB sample) than the one used to decide split points. Theoretical results of these papers and Proposition \ref{Prop:Gini_interpretation} of our paper convey the same message, that finite sample bias is caused by using the same data for growing trees and for estimation, and the bias can be reduced if we leverage OOB data. We believe the theoretical contributions of those papers can also help us analyze the statistical properties (such as variance) of the MDI-oob. 

\emph{Future directions.} Although the MDI-oob shows promising results for selecting relevant features, it also raises many interesting questions to be considered in the future. First of all, how can MDI-oob be extended to better accommodate correlated features? Going beyond feature selection, can importance measures also rank the relevant features in a reasonable order? Finally, can we use the new analytical expression of MDI to give a tighter theoretical bound for MDI's feature selection bias? We are exploring these interesting questions in our ongoing work.





\section*{Acknowledgements}
The authors would like to thank Merle Behr and Raaz Dwivedi from University of California, Berkeley for their very helpful comments of this paper that greatly improve its presentation. Partial supports are gratefully acknowledged from ARO grant W911NF1710005, ONR grant N00014-16-1-2664, NSF grants DMS-1613002 and IIS 1741340, and the Center for Science of Information (CSoI), a US NSF Science and Technology Center, under grant agreement CCF-0939370.

\bibliographystyle{plain}

\newpage
\section*{Appendix: Proofs}

\begin{proof}[Proof of Theorem \ref{Thm:bound}]

To state the proof of the theorem, we need to define more notations. For a generic set $A\subset [0,1]^p$, with slight abuse of notations, let $N_n(A) = \sum_{i}\1(\bx_i\in A)$
be the number of samples with input features in $A$, and
\[
\mu_n(A) = \frac{\sum_{\bx_i\in A}y_i}{N_n(A)}
\]
be the average response of those samples. For any feature $X_k$ and $z\in(0, 1)$, let
$\Delta_\cI(A, (k, z))$ be the impurity decrease when splitting $A$ into
$A\cap \{X_k \leq z\}$ and $A\cap \{z<X_k\}$, and $\Delta_\cI(A, k) = \sup_{0\leq z\leq 1} \Delta_\cI(A, (k, z))$. 

The proof of the theorem proceeds in three parts. First, we prove a lemma which gives a tail bound for $\Delta_\cI(A, k)$. Second, we use the lemma and union bound to derive the upper bound for the expectation of $G_0(\tree)$. Finally, we use a separate argument based on Gaussian comparison inequalities to obtain the lower bound.

\begin{lemma}
\label{lemma:singletailbound}
For any axis-aligned hyper-rectangle $A\subset [0,1]^p$, $k \notin S$ and $\delta>0$, we have
\[
\PP_{X, \epsilon}(\Delta_\cI(A, k)\geq \delta\big | N_n(A))\leq 4N_n(A)e^{-\frac{\delta N_n(A)}{4(M+1)^2}}.
\]
\end{lemma}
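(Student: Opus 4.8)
\emph{Proof proposal.} The plan is to rewrite the supremum over split points as a maximum over ``prefix sums,'' bound each such sum by a sub-Gaussian tail inequality for sampling without replacement, and finish with a union bound. The starting point is the elementary variance identity: for a split of $A$ into $A_L=A\cap\{X_k\le z\}$ and $A_R=A\cap\{X_k>z\}$, writing $N=N_n(A)$ and $\ell=N_n(A_L)$, the law of total variance gives
\[
\Delta_\cI\bigl(A,(k,z)\bigr)=\frac{N_n(A_L)N_n(A_R)}{N^2}\bigl(\mu_n(A_L)-\mu_n(A_R)\bigr)^2=\frac{1}{\ell(N-\ell)}\Bigl(\sum_{i\in A_L}\bigl(y_i-\mu_n(A)\bigr)\Bigr)^2 .
\]
Condition on $N_n(A)=N$ and on which samples fall in $A$; relabel them $1,\dots,N$ and set $z_i=y_i-\mu_n(A)$, so that $\sum_i z_i=0$. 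Since $X_k$ is a noisy feature with a continuous (uniform, by (A1)) marginal that is independent of $(Y,X_{-k})$, conditionally on this event the $X_k$-ordering of the $N$ samples is a uniformly random permutation $\sigma$ of $[N]$ independent of $(z_1,\dots,z_N)$, and as $z$ sweeps $[0,1]$ the set $A_L$ runs through the prefixes $\{\sigma(1),\dots,\sigma(\ell)\}$, $\ell=0,\dots,N$. Therefore $\Delta_\cI(A,k)=\max_{1\le\ell\le N-1}P_\ell^2/\bigl(\ell(N-\ell)\bigr)$ where $P_\ell:=z_{\sigma(1)}+\cdots+z_{\sigma(\ell)}$.

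The crux is a sub-Gaussian tail bound for each $P_\ell$, which I would obtain by splitting $z_i=\bigl(\phi(\bx_{i,S})-\overline\phi\bigr)+(\epsilon_i-\overline\epsilon)$. Conditionally on $\sigma$ and on the relevant features, the noise contribution to $P_\ell$ is exactly Gaussian with variance $\tfrac{\ell(N-\ell)}{N}\var(\epsilon_1)$ (the weights $\ell/N$ and $(N-\ell)/N$ combine to this because $P_\ell$ is a ``bridge'' vanishing at $\ell=N$), while $\sum_{j\le\ell}\bigl(\phi(\bx_{\sigma(j),S})-\overline\phi\bigr)$ is a sum of $\ell$ draws without replacement from a population with values in $[-2M,2M]$, whose conditional moment generating function, by Hoeffding's reduction to sampling with replacement followed by Hoeffding's lemma, is that of a sub-Gaussian with proxy $O(\ell M^2)$. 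Multiplying these two conditional MGFs and taking expectations makes $P_\ell$ sub-Gaussian with proxy $O\!\bigl(\tfrac{\ell(N-\ell)}{N}(M+1)^2\bigr)$ (here $\var(\epsilon_1)\le 1$ after rescaling); to control the terms with $\ell$ close to $N$ I would either use the symmetry $P_\ell\overset{d}{=}-P_{N-\ell}$ obtained by reversing $\sigma$ to reduce to $\ell\le N/2$, or invoke the Hoeffding--Serfling refinement directly. With the choice $t^2=\delta\,\ell(N-\ell)$ the $\ell(N-\ell)$ factors cancel, and after collecting constants one gets $\PP\bigl(P_\ell^2\ge\delta\,\ell(N-\ell)\,\big|\,N_n(A)\bigr)\le 2\exp\!\bigl(-\delta N/(4(M+1)^2)\bigr)$.

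A union bound over $\ell\in\{1,\dots,N-1\}$ then yields $4N_n(A)\exp\!\bigl(-\delta N_n(A)/(4(M+1)^2)\bigr)$, absorbing the $N-1\le N$ and a factor $2$. The main obstacle is this concentration step: one has to handle the \emph{unbounded} Gaussian noise simultaneously with the without-replacement dependence, and in particular capture the ``bridge shrinkage'' that makes $P_\ell/\sqrt{\ell(N-\ell)}$ have variance of order $(M+1)^2/N$ rather than $O(1)$ --- intuitively, a split on a noisy feature is essentially a random split, so each side-mean deviates from $\mu_n(A)$ by only $O((M+1)/\sqrt N)$. That $1/N$ is precisely what produces decay in $\delta N$ here and, once fed through the tree-level union bound in the proof of Theorem~\ref{Thm:bound}, the $1/m_n$ rate there. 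Keeping the estimate symmetric in $\ell\leftrightarrow N-\ell$ is the only other point that needs care.
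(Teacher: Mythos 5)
Your argument is correct and proves the lemma, but the concentration step is genuinely different from the paper's. Both proofs share the same skeleton: the exact identity $\Delta_\cI(A,(k,z))=p^{\lc}p^{\rc}(\mu^{\lc}-\mu^{\rc})^2$, the observation that independence of $X_k$ from $(X_S,\epsilon)$ makes the $X_k$-ordering of the responses exchangeable (so the supremum over $z$ is a maximum over $N_n(A)$ prefix sums), and a final union bound over split positions. Where you diverge is in how each prefix is controlled. The paper centers at the conditional population mean $\mu=\E[Y\mid X\in A]$, bounds $p^{\lc}p^{\rc}(\mu^{\lc}-\mu^{\rc})^2\leq 2p^{\lc}(\mu^{\lc}-\mu)^2+2p^{\rc}(\mu^{\rc}-\mu)^2$, and applies plain i.i.d.\ sub-Gaussian Hoeffding to $\frac{m}{N}(\frac1m\sum_{i\le m}y_i-\mu)^2$ separately for the left and right children. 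You instead keep the exact bridge form $P_\ell^2/(\ell(N-\ell))$ with sample-mean-centered increments and invoke sampling-without-replacement concentration (Hoeffding's reduction plus the exact Gaussian bridge variance $\frac{\ell(N-\ell)}{N}\var(\epsilon_1)$, with the $\ell\leftrightarrow N-\ell$ symmetry or Hoeffding--Serfling to handle $\ell>N/2$). Your route buys a cleaner structure --- no population mean, no factor-of-two left/right split, both children handled at once, and the bridge cancellation made explicit --- at the price of heavier machinery (without-replacement MGF bounds and a separate treatment of the unbounded Gaussian component, which you correctly decouple by conditioning). The only caveat is the constant: your chain of estimates yields $\exp(-c\,\delta N_n(A)/(M+1)^2)$ for some absolute $c$ that does not obviously equal $1/4$; this is immaterial since the lemma feeds into Theorem~\ref{Thm:bound} only through the unspecified constant $C$, and the paper's own bookkeeping of constants (e.g.\ the sub-Gaussian parameter of $Y$ and the placement of $(M+1)^2$ in its Hoeffding display) is comparably loose.
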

\begin{proof}[Proof of Lemma \ref{lemma:singletailbound}]
 We suppose without loss of generality that
 $\bx_1,\dots, \bx_{N_n(A)} \in A$. For any $z\in [0,1]$, we let
 \[A^\lc = A\cap \{0\leq X_k \leq z\}, \quad A^\rc = A\cap \{z < X_k \leq 1\},\]
 and introduce the shorthands
 \[p^{\lc} =\frac{N_n(A^\lc)}{N_n(A)}, \quad p^{\rc} =\frac{N_n(A^\rc)}{N_n(A)}, \quad \mu^\lc = \mu_n(A^\lc), \quad \mu^\rc = \mu_n(A^\rc).\]
Then 
\begin{align*}
\Delta_{\cI}(A, (k, z))=& \, \frac{1}{N_n(A)}\sum_{\bx_i \in A}(y_i-\mu_n(A))^2-\frac{1}{N_n(A)}\sum_{\bx_i \in A}(y_i-\mu_n(A^\lc))^2 \, \mathds{1}({x_{ik}\leq z})\\
&-\frac{1}{N_n(A)}\sum_{\bx_i \in A}(y_i-\mu_n(A^\rc))^2 \, \mathds{1}({x_{ik}>z})\\
=& \, \frac{1}{N_n(A)}\sum_{\bx_i \in A}y_i^2-\mu_n(A)^2-p^\lc(\frac{1}{N_n(A)p^\lc}\sum_{\bx_i \in A}y_i^2 \, \mathds{1} ({x_{ik}\leq z})-(\mu^\lc)^2)\\
&-p^\rc(\frac{1}{N_n(A)p^\rc}\sum_{\bx_i \in A}y_i^2 \, \mathds{1} ({x_{ik}> z})-(\mu^\rc)^2)\\
=& \, p^\lc(\mu^\lc)^2+p^\rc(\mu^\rc)^2-\mu_n(A)^2\\
=& \,  (p^\lc(\mu^\lc)^2+p^\rc(\mu^\rc)^2)(p^\lc+p^\rc)-(p^\lc\mu^\lc+p^\rc\mu^\rc)^2\\
=& \, p^\lc p^\rc(\mu^\lc-\mu^\rc)^2\\
\leq& \,  2p^\lc p^\rc[(\mu^\lc-\mu)^2+(\mu^\rc-\mu)^2]\\
\leq& \,  2p^\lc(\mu^\lc-\mu)^2+2p^\rc(\mu^\rc-\mu)^2,
\end{align*}
where
\[
\mu = \E[Y|X\in A] = \E[\phi(X)|X\in A].
\]
Now suppose without loss of generality that
$x_{1k}<x_{2k}<\dots < x_{nk}$ (otherwise we can reorder the samples by $X_k$). Since $k \notin S$, $X_k$ is independent of $X_{S}$ and therefore independent of $Y$.
Thus the distribution of $(y_1,\dots, y_n)$ does not change after the reordering, i.e.,
\[
y_i\stackrel{i.i.d}{\sim} (\phi(X)|X\in A) + \epsilon.
\]
Note that
\[
\sup_{z} \, \, p^\lc(\mu^\lc-\mu)^2
\leq \sup_{1\leq m\leq N_n(A)}\frac{m}{N_n(A)} \left(\frac{1}{m}\sum_{i = 1}^m y_i-\mu \right)^2.
\]
Note that $Y$ is sub-Gaussian with parameter $M+1$. Therefore, for each $1\leq m\leq N_n(A)$, by Hoeffding bound,
\[
\PP \left(\frac{m}{N_n(A)} \left(\frac{1}{m}\sum_{i=1}^m y_i-\mu \right)^2\geq\delta \bigg |N_n(A) \right)\leq 2e^{-(M+1)^2\delta N_n(A)^2/m}\leq 2e^{-\frac{\delta N_n(A)}{(M+1)^2}}.
\]
Therefore
\[
\PP \left(\sup_{z} p^\lc(\mu^\lc-\mu)^2\geq\delta \bigg |N_n(A) \right)\leq 2N_n(A)e^{-\frac{\delta N_n(A)}{(M+1)^2}}.
\]
By symmetry, the same bound holds for $p^\rc(\mu^\rc-\mu)^2$.
Therefore
\begin{align*}
&\PP(\Delta_\cI(A, k)\geq \delta \big |N_n(A))\\
\leq & \PP \left(\sup_{z} p^\lc(\mu^\lc-\mu)^2\geq\delta/2 \big |N_n(A) \right)+\PP \left(\sup_{z} p^\rc(\mu^\rc-\mu)^2\geq\delta/2 \big |N_n(A) \right) \\
\leq & 4N_n(A)e^{-\frac{\delta N_n(A)}{4(M+1)^2}},
\end{align*}
and the lemma is proved.
\end{proof}

\textbf{Proof of the upper bound in Theorem \ref{Thm:bound}}

Without loss of generality, assume that when we split on feature $k$, the cut is always performed along the direction of $k$ at some data point (and that data point falls into the right sub-tree). Suppose that $\epsilon_i$ has unit variance for all $i$. Let $C = 2\max\{256, 16(M+1)^2\}.$ We also assume, without loss of generality, that $m_n \geq 8d_n$. Otherwise, since $G_0(\tree)$ is, by definition, upper bounded by the sample variance of $y$, we have
\[
\E_{X, \epsilon}\sup_{\tree \in \mathcal{\tree}_n(m_n, d_n)} G_0(\tree) \leq \var(Y) \leq M^2 + 1\leq 16(M+1)^2\frac{d_n\log np}{m_n}.
\]

To simplify notation, we define $\bx_{n+1} = (0, \dots, 0)$ and $\bx_{n+2} = (1, \dots, 1)$. For any $V\subset [p], \LL, \RR\in [n+2]^{|V|}$, let
\[
A(V, \LL, \RR)=\{X=(X_1, \dots, X_p):x_{\LL_i, V_i}\leq  X_{V_i}< x_{\RR_i, V_i}, 1\leq i\leq |V|, 0\leq X_k\leq 1, k\notin V\}
\]
be the random axis-aligned hyper-rectangle obtained by splitting on features in $V$, where the left and right endpoints of the $i$th feature $V_i$ are determined by $x_{\LL_i, V_i}$ and $x_{\RR_i, V_i}$. Note that in this definition, we treat $\bx_i$ as random variables rather than fixed, and $A(V, \LL, \RR)$ can be the empty set with non-zero probability. Let
\[
A(V) = \{A(V, \LL, \RR)|\LL,\RR\in [n+2]^{|V|}\}
\]
be all axis-aligned hyper-rectangles obtained by splitting on features in $V$. 
For any $d\leq d_n$, let
\[
A_d = \cup_{|V|=d}A(V)
\]
be the collection of all possible subsets of $[0,1]^p$ obtained by splitting on $d$ features.

Fix $\delta > \frac{96M^2d_n}{m_n}$. We will first show that
\begin{equation}
\begin{aligned}
&\PP_{X, \epsilon} \left(\exists A\in A_d, k\notin S: \Delta_\cI(A, k)\geq \frac{m_n\delta}{N_n(A)}\text{ and } N_n(A)\geq m_n \right)\\
\leq & \, 5(np)^{d+1}\exp \left(-\frac{\delta m_n}{\max\{256, 16(M+1)^2\}} \right).
\end{aligned}
\end{equation}
Note that for any two events $C_1$ and $C_2$, the inequality $\PP(C_1\cap C_2)\leq \PP(C_1|C_2)$ always holds. Therefore, for any hyper-rectangle $A$, we have
\begin{equation}
\begin{aligned}
&\PP_{X, \epsilon} \left(\Delta_\cI(A, k)\geq \frac{m_n\delta}{N_n(A)}\text{ and } N_n(A)\geq m_n \right)\\
\leq &\PP_{X, \epsilon} \left(\Delta_\cI(A, k)\geq  \frac{m_n\delta}{N_n(A)}\bigg |N_n(A)\geq m_n \right)
\end{aligned}
\end{equation}

To simplify notation, we will drop the conditional event $N_n(A)\geq m_n$ in the remainder of the proof of the upper bound, unless stated otherwise. 

Fix $V\subset [p], \LL,\RR \in [n+2]^{|V|}$, and $k\notin S$. Conditional on samples in $\LL$ and $\RR$, we would like to apply Lemma \ref{lemma:singletailbound} to $A(V, \LL,\RR)$ and $k$. The only problem is that there are now samples on the boundary of $A(V, \LL,\RR)$, namely those in $\LL$ and $\RR$. Let $\bx_\LL = \{\bx_i\}_{i\in \LL}$ and $\bx_\RR = \{\bx_i\}_{i\in \RR}$. Conditional on $\bx_\LL$, $\bx_\RR$ and $N_n(A(V, \LL,\RR))$, and on the random variable $X \in A(V, \LL,\RR)$, $X$ is uniformly distributed in $A(V, \LL,\RR)$. For a set $A$, we let $A^{\circ}$ be the interior of $A$ and let $\bar{A}$ be the boundary of $A$. Since $m_n\geq 8 d_n$, 
\[
\frac{N_n(A^{\circ}(V, \LL,\RR))}{N_n(A(V, \LL,\RR))} \geq \frac{m_n-2d_n}{m_n} \geq \frac{3}{4}.
\]
By Lemma \ref{lemma:singletailbound}, we have
\begin{equation}
\label{eq:bound1pre}
\begin{aligned}
    &\PP_{X, \epsilon} \left(\Delta_\cI(A^{\circ}(V, \LL,\RR), k)
    \geq \frac{m_n\delta}{3N_n(A(V, \LL,\RR))}\bigg |\bx_\LL, \bx_\RR, N_n(A(V, \LL,\RR)) \right)\\
    \leq & \, 4N_n(A^{\circ}(V, \LL,\RR))\exp \left(-\frac{\delta m_n N_n(A^{\circ}(V, \LL,\RR))}{12(M+1)^2N_n(A(V, \LL,\RR))} \right)\\
    \leq & \, 4n\exp \left(-\frac{\delta m_n}{16(M+1)^2} \right)
\end{aligned}
\end{equation}
for large $n$. Since the right hand side does not depend on $\bx_\LL, \bx_\RR, N_n(A(V, \LL,\RR))$, we can take expectation with respect to them, and obtain
\begin{equation}
\label{eq:bound1}
\begin{aligned}
    \PP_{X, \epsilon} \left(\Delta_\cI(A^{\circ}(V, \LL,\RR), k)
    \geq \frac{m_n\delta}{3N_n(A(V, \LL,\RR))} \right)\leq 4n\exp \left(-\frac{\delta m_n}{16(M+1)^2} \right)
\end{aligned}
\end{equation}
On the other hand, we have the inequality
\begin{equation}
\label{eq:bound2}
\begin{aligned}
\Delta_\cI(A(V, \LL,\RR), k) &\leq \Delta_\cI(A^{\circ}(V, \LL,\RR), k) + \frac{\sum_{i\in \LL,\RR}(y_i-\mu_n(A(V, \LL,\RR)))^2}{N_n(A(V, \LL,\RR))}\\
& \leq \Delta_\cI(A^{\circ}(V, \LL,\RR), k) + \frac{\sum_{i\in \LL,\RR}2(y_i^2 +\mu_n(A(V, \LL,\RR))^2)}{N_n(A(V, \LL,\RR))}.
\end{aligned}
\end{equation}
We have 
\begin{equation}
\label{eq:bound3}
\begin{aligned}
&\PP_{X, \epsilon}\bigg(\frac{\sum_{i\in \LL,\RR}2y_i^2 }{N_n(A(V, \LL,\RR))}\geq \frac{m_n\delta}{3N_n(A(V, \LL,\RR))}\bigg)\\
\leq & \PP\bigg(\frac{\sum_{i\in \LL,\RR}4(f^2(\bx_i)+\epsilon_i^2)}{N_n(A(V, \LL,\RR))}\geq \frac{m_n\delta}{3N_n(A(V, \LL,\RR))}\bigg)\\
\leq &
\PP\bigg(\frac{\sum_{i\in \LL,\RR}4(f^2(\bx_i)+\epsilon_i^2)}{m_n}\geq \frac{\delta}{3}\bigg)\\
\leq &\PP\bigg(\frac{\sum_{i\in \LL,\RR}4M^2+4\epsilon_i^2}{m_n}\geq \frac{\delta}{3}\bigg)\\
\leq &\PP\bigg(\frac{\sum_{i=1}^{2d_n} (\epsilon_i^2-1)}{m_n}\geq \frac{\delta}{16N_n(A^{\circ}(V, \LL,\RR)})\bigg) \leq \exp(-\frac{\delta m_n}{256}),
\end{aligned}
\end{equation}
for large $n$, where the fourth inequality holds because $\delta\geq 96M^2d_n/m_n$, and the last inequality follows from the well-known tail bound
\[
\PP \left( \left|\frac{1}{d}\chi^2_d-1 \right|\geq \delta_0 \right) \leq 2e^{-d\delta_0^2/8}
\]
for $\chi^2_d$ random variable and $\delta_0 < 1$.
To upper bound $\mu_n(A(V, \LL,\RR))$, note that
\begin{equation}
\label{eq:bound4}
\begin{aligned}
&\PP \left(\frac{\sum_{i\in \LL,\RR}2\mu_n(A(V, \LL,\RR))^2 }{N_n(A(V, \LL,\RR))}\geq\frac{m_n\delta}{3N_n(A(V, \LL,\RR))} \right) \\
\leq & \, \PP \left(|\mu_n(A(V, \LL,\RR))|\geq \sqrt{\frac{\delta m_n}{6d_n}} \right)\\
\leq & \, \PP \left( \left|\frac{1}{N_n(A(V, \LL,\RR))}\sum_{i=1}^{N_n(A(V, \LL,\RR))}\epsilon_i \right|\geq \sqrt{\frac{\delta m_n}{6d_n}} - M \right)\\
\leq & \, 2\exp \left(-\frac{1}{2}m_{n}(\sqrt{\frac{\delta m_n}{6d_n}}-M)^2 \right)\\
\leq & \, 2\exp \left(-\frac{\delta m_n}{4} \right),
\end{aligned}
\end{equation}
where the last inequality follows from $m_n \geq 8d_n$ and $\delta\geq 96M^2d_n/m_n.$ Combining Equations \eqref{eq:bound1}, \eqref{eq:bound2}, \eqref{eq:bound4}, we have
\begin{equation}
\label{eq:fullbound}
\begin{aligned}
    \PP_{X, \epsilon} \left(\Delta_\cI(A(V, \LL,\RR), k)\geq \frac{m_n\delta}{3N_n(A(V, \LL,\RR))} \right)\leq  5n\exp \left(-\frac{\delta m_n}{\max\{16(M+1)^2, 256\}}\right)
\end{aligned}
\end{equation}
for any $V\subset [p], |V|=d, \LL,\RR \in [n+2]^{|V|}$, and $k\notin S$. Note that the set $A_d$ has cardinality
\[
|A_d| = \binom{p}{d}(2(n+2))^d\leq  \left(\frac{pn}{d} \right)^d
\]
for large $n$. Therefore by union bound,
\begin{equation}
    \begin{aligned}
        \PP \left(\exists A\in A_d, k\notin S: \Delta_\cI(A, k)\geq \frac{m_n\delta}{N_n(A)} \right)&\leq 5np|A_d|\exp \left(-\frac{\delta m_n}{\max\{256, 16(M+1)^2\}} \right)\\
        &\leq
        5(np)^{d+1}\exp \left(-\frac{\delta m_n}{\max\{256, 16(M+1)^2\}} \right).
    \end{aligned}
\end{equation}
Suppose that $\Delta_\cI(A, k)\geq \frac{m_n\delta}{N_n(A)}$ for all $A\in \cup_{d\leq d_n}A_d$ and $k\notin S$, then for any $\tree \in T_n(m_n, d_n)$,
\[
G_0(T)\leq \sum_{t: v(t)\notin S}\frac{N_n(t)}{n}\frac{m_n\delta}{N_n(t)}\leq \delta\frac{m_n|I(t)|}{n}\leq \delta,
\]
where the last inequality follows since $|I(t)|+1$ is the total number of leaf nodes in $\tree$, and each leaf node contains at least $m_n$ samples. Therefore
\begin{equation}
    \begin{aligned}
        \PP_{X, \epsilon} \left(\sup_{\tree \in \mathcal{\tree}_n(m_n, d_n)} G_0(\tree) \geq \delta \right)&\leq 
        \sum_{d=1}^{d_n}\PP \left(\exists A\in A_d, k\notin S: \Delta_\cI(A, k)\geq \frac{m_n\delta}{N_n(A)} \right)\\
        &\leq \sum_{d=1}^{d_n} 5(np)^{d+1}\exp \left(-\frac{\delta m_n}{\max\{256, 16(M+1)^2\}} \right)\\
        &\leq
        10(np)^{d_n+1}\exp \left(-\frac{\delta m_n}{\max\{256, 16(M+1)^2\}} \right)
    \end{aligned}
\end{equation}
for any $\delta > \frac{96M^2d_n}{m_n}$. Recall that $C = 2\max\{256, 16(M+1)^2\}.$ Note that $\frac{C d_n\log(np)}{m_n} \geq  \frac{96M^2d_n}{m_n}$ for large $n$. Integrating over $\delta$, we have
\begin{equation}
\begin{aligned}
    &\E_{X, \epsilon} \left[ \sup_{\tree \in \mathcal{\tree}_n(m_n, d_n)} G_0(\tree) \right] \\
    &\leq \frac{3 d_n\log(np)}{2m_n} + \E_{X, \epsilon} \left[ \sup_{\tree \in \mathcal{\tree}_n(m_n, d_n)} G_0(\tree) \1(\delta \geq \frac{3 d_n\log(np)}{2m_n}) \right] \\
    &\leq \frac{3 d_n\log(np)}{2m_n} + \int_{\frac{3 d_n\log(np)}{2m_n}}^{\infty}\PP_{X, \epsilon} \left(\sup_{\tree \in \mathcal{\tree}_n(m_n, d_n)} G_0(\tree) \geq \delta \right) d\delta\\    
    &\leq \frac{C d_n\log(np)}{m_n}.
\end{aligned}
\end{equation}
This completes the proof of the upper bound.

\textbf{Proof of the lower bound in Theorem \ref{Thm:bound}}

For the lower bound, let
\begin{equation}
    d_n = \max\{d: 2^{d+1} m_n < n\},
\end{equation}
and consider a balanced, binary decision tree $\tree$ constructed in the following way: 
\begin{enumerate}
    \item At each node on the first $d_n-1$ levels of the tree, we split on feature $X_1$, at the mid-point of $X_1$'s side of the rectangle corresponding to the node.
    \item At each node on the $d_n$th level, we look at the remaining $p-1$ features, and split on the feature that maximizes the decrease in impurity.
\end{enumerate}

In the following proof, we will lower bound $G_0(\tree)$ by the sum of impurity reduction on the $d_n$th level alone. 
For $t=1, \dots, 2^{d_n-1}$, let 
\[
R_t= \bigg\{
\frac{t-1}{2^{d_n-1}}\leq X_1< \frac{t}{2^{d_n-1}}\bigg\}.
\]
be the hyper-rectangle corresponding to the $t$th node on the $d_n$th level.
Applying Chernoff's inequality, we have
\[
\PP\bigg (\bigg |\frac{N_n(R_t)}{n} - \frac{1}{2^{d_n-1}}\bigg |\geq\frac{1}{3\cdot 2^{d_n-1}}\bigg ) \leq 
2\exp\bigg (-\frac{n}{27\cdot 2^{d_n-1}}\bigg ).
\]
Let 
\[
B_1 = \bigg\{\bigg |\frac{N_n(R_t)}{n} - \frac{1}{2^{d_n-1}}\bigg |\leq\frac{1}{3\cdot 2^{d_n-1}}\text{ for all }t \bigg\}
\]
be the event that each node on the $d_n$th level contains 
at least $\displaystyle \frac{2}{3} \frac{n}{2^{d_n-1}}$, but no more than $\displaystyle \frac{4}{3} \frac{n}{2^{d_n-1}}$ samples. Then 
\begin{equation}
    \label{bound:pb1}
    \PP(B_1^c) \leq \sum_{t=1}^{2^{d_n-1}} \PP\bigg (\bigg |\frac{N_n(R_t)}{n} - \frac{1}{2^{d_n-1}}\bigg |\geq\frac{1}{3\cdot 2^{d_n-1}}\bigg ) \leq 2^{d_n}\exp\bigg (-\frac{n}{27\cdot 2^{d_n-1}}\bigg ),
\end{equation}
and conditional on $B_1$,
\begin{equation}
    \label{bound:nnak}
    \frac{8}{3}m_n\leq\frac{2}{3}\frac{n}{2^{d_n-1}}\leq N_n(R_t) \leq \frac{4}{3}\frac{n}{2^{d_n-1}} \leq \frac{32}{3}m_n.
\end{equation}
We define
\[
R_t^l(k) = R_t \cap \bigg\{
0\leq X_k<\frac{1}{2}\bigg \}
\]
and
\[
R_t^r(k) = R_t \cap \bigg\{
\frac{1}{2}\leq X_k< 1\bigg \}
\]
and use $R_t^l, R_t^r$ as shorthand when $k$ is fixed.
For each $t=0, 1, \dots, 2^d-1$, by Equation 
\[
\Delta_{\cI}(R_t, k)\geq \Delta_{\cI}(R_t, (k, 1/2)) = \frac{N_n(R_t^l)}{N_n(R_t)}\frac{N_n(R_t^r)}{N_n(R_t)}(\mu_n(R_t^l)-\mu_n(R_t^r))^2
\]
Let
\[
\eta_k = \mu_n(R_t^l)-\mu_n(R_t^r)
\]
Conditional on $N_n(R_t^l)$ and $N_n(R_t^r)$,  $\mathbf{\eta} = (\eta_2, \dots, \eta_{p})$ are jointly Gaussian with zero mean.
To lower bound the impurity decrease at the $t$th node on the $d_n$th level, we use a Gaussian comparison argument to obtain a lower bound for $\sup_{k}|\eta_k|$, which requires us to calculate the covariance matrix of $\mathbf{\eta}$. For any $2\leq k_1, k_2\leq p$,
let us further define
\[
R_t^{ll}(k_1, k_2) =  R_t \cap \bigg\{
0\leq X_{k_1}< \frac{1}{2}\bigg \}  \cap \bigg\{
0\leq X_{k_2}< \frac{1}{2}\bigg \};
\]
\[ R_t^{lr}(k_1, k_2) =  R_t \cap \bigg\{
0\leq X_{k_1}< \frac{1}{2}\bigg \}  \cap \bigg\{
\frac{1}{2}\leq X_{k_2}< 1\bigg \};
\]
\[ R_t^{rl}(k_1, k_2) =  R_t \cap \bigg\{
\frac{1}{2}\leq X_{k_1}< 1\bigg \}  \cap \bigg\{
0\leq X_{k_2}< \frac{1}{2}\bigg \};
\]
\[ R_t^{rr}(k_1, k_2)  = R_t \cap \bigg\{
\frac{1}{2}\leq X_{k_1}< 1\bigg \}  \cap \bigg\{
\frac{1}{2}\leq X_{k_2}< 1\bigg \}.
\]
As before, we write $R_t^{ll}, R_t^{lr}, R_t^{rl}$ and $R_t^{rr}$ as shorthand when $k_1, k_2$ are fixed. Conditional on $N_n(R_t)$, the samples falling into the hyper-rectangle $R_t$ are uniformly distributed in $R_t$. Therefore we know from Chernoff's inequality that
\[
\PP\bigg (\bigg |\frac{N_n(R_t^{ll})}{N_n(R_t)} - \frac{1}{4} \bigg | \geq \frac{1}{16} \bigg) \leq 2\exp\bigg (-\frac{N_n(R_t)}{48}\bigg )
\]
for any $k_1$ and $k_2$, and that the same results hold for $R_t^{lr}, R_t^{rl}$ and $R_t^{ll}$ as well.
Let
\[
B_2=\bigg\{\max_{\omega \in \{ll,lr,rl,rr\}}\bigg |\frac{N_n(R_t^{\omega}(k_1, k_2))}{N_n(R_t)} - \frac{1}{4} \bigg | \leq \frac{1}{16}, \text{ for all }1\leq t\leq 2^{d_n-1}, 2\leq k_1<k_2\leq p \bigg\}.
\]
Then
\begin{equation}
    \label{bound:pb2}
    \PP(B_2^c) \leq 2^{d_n} p^2 \exp\bigg (-\frac{N_n(R_t)}{48}\bigg ),
\end{equation}
and
\begin{equation}
    \label{bound:pb12}
    \PP(B_1\cap B_2) \geq  1 - 2^{d_n + 1} p^2 \exp\bigg (-\frac{N_n(R_t)}{48}\bigg ) \geq 1 - 2^{d_n + 1} p^2 \exp\bigg (-\frac{m_n}{18}\bigg ) \geq \frac{8}{9}
\end{equation}
for $n$ large enough (under the condition that $m_n \geq 36\log p + 18 \log n$). Conditional on the event $B_2$,
\[
N_n(R_t^l) \geq N_n(R_t^{ll}) + N_n(R_t^{lr}) \geq \frac{3}{16}N_n(R_t) + \frac{3}{16}N_n(R_t) \geq \frac{3}{8}N_n(R_t),
\]
for any $1\leq t\leq 2^{d_n-1}$ and $2\leq k\leq p$, and the same holds for $N_n(R_t^r)$. Therefore, 
\begin{equation}
\label{eq:gpvar}
\var(\eta_k) = \frac{1}{N_n(R_t^l)}+\frac{1}{N_n(R_t^r)}\geq \frac{3}{4N_n(R_t)}
\end{equation}
\begin{equation}
\label{eq:gpcov}
\cov(\eta_{k_1}, \eta_{k_2}) = \frac{1}{N_n(R_t^{ll})}+\frac{1}{N_n(R_t^{rr})}-\frac{1}{N_n(R_t^{lr})}-\frac{1}{N_n(R_t^{rl})}
\leq \frac{1}{4N_n(R_t)}.
\end{equation}
Consider $\tilde\eta_2, \dots, \tilde\eta_{p}$ with
\[
\E\tilde\eta_k = 0, \var({\tilde\eta_k}) = \frac{3}{4N_n(R_t)}
\]
and
\[
\cov(\tilde\eta_{k_1}, \tilde\eta_{k_2}) = 
\frac{1}{4N_n(R_t)}.
\]
Then conditional on $B_1\cap B_2$, by Sudakov-Fernique lemma, we have
\[
\E _{\epsilon}[\max_k \eta_k| B_1\cap B_2] \geq \E \max_k \tilde\eta_k \geq \sqrt{\frac{\log p}{N_n(R_t)}} \geq \sqrt{\frac{3\log p}{32m_n}},
\]
and the lower bound
\[
\min\{N_n(R_t^l), N_n(R_t^r)\} \geq  \frac{3}{8}N_n(R_t) \geq m_n,
\]
for any $k,t$. where the last inequality follows from Equation \eqref{bound:nnak}.
Therefore, conditional on $B_1\cap B_2$ the minimum leaf size is lower bounded by $m_n$. Finally

\begin{equation}
\begin{aligned}    
\E_{X, \epsilon} \left[ \sup_{T \in \mathcal{T}_n(m_n)} G_0(T) \right]
&\geq \, \E_{X, \epsilon} \left[ \sup_{T \in \mathcal{T}_n(m_n)} G_0(T)\1_{B_1\cap B_2} \right] \\
&\geq \, \E_X \left[ \sum_{t}\frac{N_n(R_t)}{n} \, \E_{\epsilon} \left[ \max_k \Delta_{\cI}(R_t, k) \1_{B_1\cap B_2} \right] \right]\\
&\geq \E_X \sum_{t}\frac{N_n(R_t)}{n} (\frac{3}{8})^2(\E_{\epsilon}\max_k \eta_k^2 \1_{B_1\cap B_2}) \\
& \geq \frac{9}{64} \frac{3\log p}{32m_n} \PP(B_1\cap B_2)\\
&\geq \frac{1}{80}\frac{\log p}{m_n}
\end{aligned}
\end{equation}
when $n$ is large enough, and the lower bound is proved. This concludes the whole proof.

\end{proof}

\begin{proof}[Proof of Proposition \ref{Prop:Gini_interpretation}]

For simplicity, here we only present the proof for a single tree $\tree$. The case of multiple trees is straightforward. 
Recall that $\node^\lc$ and $\node^\rc$ are the left and right children of the node $\node$. Based on \eqref{Eq:Gini_original}, MDI at the node $\node$ is 
\begin{equation}
\begin{aligned}
    \frac{N_n(\node)}{|\data^{(\tree)}|}\decrease(\node) &=   \frac{1}{|\data^{(\tree)}|}\sum_{i\in \data^{(\tree)}} [y_{i} - \pred(\node)]^2\1(\bx_i\in R_{\node})\\
    & - [ y_{i} - \pred(\node^\lc)]^2\1(\bx_i \in R_{\node^\lc}) - [ y_{i} - \pred(\node^\rc)]^2\1(\bx_i \in R_{\node^\rc}).
\end{aligned}
\end{equation} 
Because $\1(\bx_i\in R_{\node}) = \1(\bx_i \in R_{\node^\rc}) + \1(\bx_i\in R_{\node^\lc})$, the above term becomes
    \begin{align}
    &\frac{1}{|\data^{(\tree)}|}\sum_{i\in \data^{(\tree)}} \left((y_i - \pred(\node))^2 - (y_i - \pred(\node^\lc))^2\right)\1(\bx_i\in R_{\node^\lc}) \nonumber \\
    &\qquad \qquad \qquad + \left((y_i - \pred(\node))^2 - (y_i - \pred(\node^\rc))^2\right)\1(\bx_i\in R_{\node^\rc})\nonumber\\
    = & \frac{1}{|\data^{(\tree)}|}\sum_{i\in \data^{(\tree)}} (\pred(\node^\lc) - \pred(\node))(2 y_i - \pred(\node) - \pred(\node^\lc)) \1(\bx_i\in R_{\node^\lc}) \nonumber \\
    & \qquad \qquad \qquad + (\pred(\node^\rc) - \pred(\node))(2 y_i - \pred(\node) - \pred(\node^\rc)) \1(\bx_i\in R_{\node^\rc}).\label{Eq:proof_prop1_a}
    \end{align}
    Since $\sum_{i\in \data^{(\tree)}} y_i \1(\bx_i\in \node^\lc) = N_n(\node^\lc) \pred(\node^\lc)$, we know $\sum_{i\in \data^{(\tree)}} (y_i - \pred(\node^\lc)) \1(\bx_i\in R_{\node^\lc}) = 0$. Similar equations hold for the right child $\node^\rc$, too. Then \eqref{Eq:proof_prop1_a} reduces to
    \begin{align}
    \frac{1}{|\data^{(\tree)}|}\sum_{i\in \data^{(\tree)}} (\pred(\node^\lc) - \pred(\node))&(y_i - \pred(\node)) \1(\bx_i\in R_{\node^\lc})\\
    &+ (\pred(\node^\rc) - \pred(\node))(y_i - \pred(\node)) \1(\bx_i\in R_{\node^\rc})\label{Eq:proof_prop1_b}
    \end{align}
    Because of the definitions of $\pred(\node^\lc)$, $\pred(\node^\rc)$, and $\pred(\node)$, we know 
    \begin{equation}
        \begin{aligned}
            N_n(\node^\lc)\pred(\node^\lc)+N_n(\node^\rc)\pred(\node^\rc) = N_n(\node)\pred(\node). 
        \end{aligned}
    \end{equation}

    That implies 
    $\sum_{i\in \data^{(\tree)}} (\pred(\node^\lc) - \pred(\node))\1(\bx_i\in R_{\node^\lc}) + (\pred(\node^\rc) - \pred(\node)) \1(\bx_i\in R_{\node^\rc}) = 0$.
    Using this equation, \eqref{Eq:proof_prop1_b} can be written as
    \begin{align}
     & \frac{1}{|\data^{(\tree)}|}\sum_{i\in \data^{(\tree)}} (\pred(\node^\lc) - \pred(\node)) y_i  \1(\bx_i\in \R_{\node^\lc}) + (\pred(\node^\rc) - \pred(\node)) y_i \1(\bx_i\in \R_{\node^\rc}).
\end{align}
In summary, we have shown that:
\begin{align}
    \frac{N_n(\node)}{|\data^{(\tree)}|}\decrease(\node) = \frac{1}{|\data^{(\tree)}|}\sum_{i\in \data^{(\tree)}} (\pred(\node^\lc) - \pred(\node)) y_i  \1(\bx_i\in \R_{\node^\lc}) + (\pred(\node^\rc) - \pred(\node)) y_i \1(\bx_i\in \R_{\node^\rc}).
\end{align}

Since the MDI of the feature $k$ is the sum of $\frac{N_n(\node)}{|\data^{(\tree)}|}\decrease(\node)$ across all inner nodes such that $v(\node) = k$, we have
\begin{align*}
 & \sum_{\node\in I(\tree)} \frac{N_n(\node)}{|\data^{(\tree)}|}\decrease(\node) \1(v(\node) = k)\\
 =& \sum_{\node\in I(\tree):v(\node) = k}\frac{1}{|\data^{(\tree)}|}\sum_{i\in \data^{(\tree)}} (\pred(\node^\lc) - \pred(\node)) y_i  \1(\bx_i\in \R_{\node^\lc}) + (\pred(\node^\rc) - \pred(\node)) y_i \1(\bx_i\in \R_{\node^\rc})\\
 = & \frac{1}{|\data^{(\tree)}|}\sum_{i\in \data^{(\tree)}} \Big[\sum_{\node\in I(\tree):v(\node) = k}(\pred(\node^\lc) - \pred(\node)) \1(\bx_i\in \R_{\node^\lc}) + (\pred(\node^\rc) - \pred(\node)) \1(\bx_i\in \R_{\node^\rc})\Big] y_i\\
 = & \frac{1}{|\data^{(\tree)}|}\sum_{i\in \data^{(\tree)}}f_{\tree, k}(\bx_i) y_i.
\end{align*}
That completes the proof.
\end{proof}

\newpage

\begin{figure}[h!]
    \includegraphics[width=0.46\textwidth]{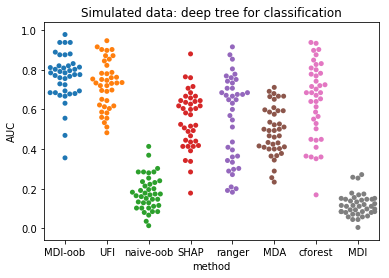}
    \includegraphics[width=0.46\textwidth]{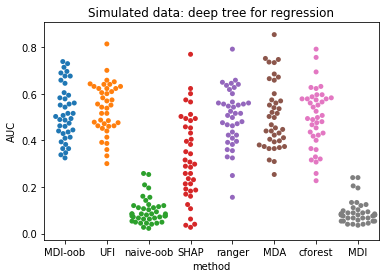}
    \includegraphics[width=0.46\textwidth]{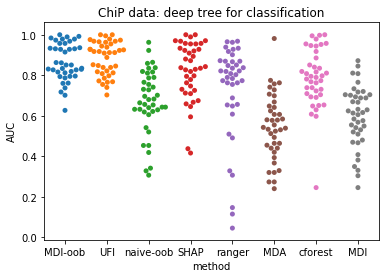}
    \includegraphics[width=0.46\textwidth]{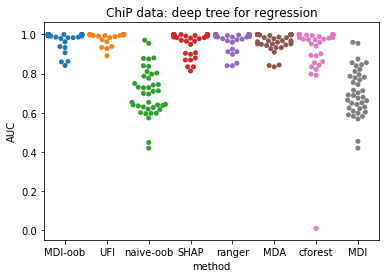}
    \includegraphics[width=0.46\textwidth]{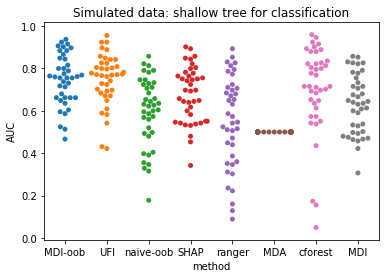}
    \includegraphics[width=0.46\textwidth]{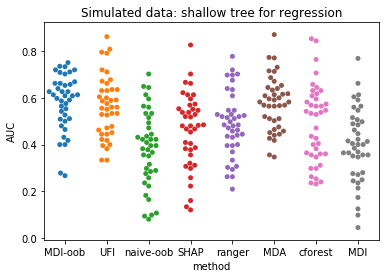}
    \includegraphics[width=0.46\textwidth]{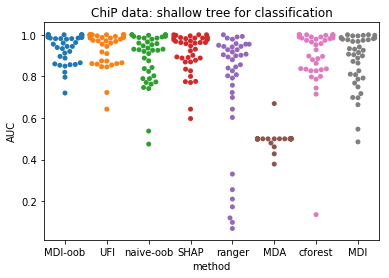}\hfill
    \includegraphics[width=0.46\textwidth]{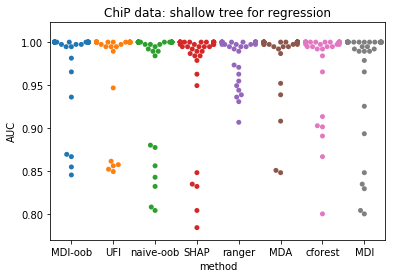}
    \caption{The beeswarm plots for different simulation settings.}
\end{figure}
\newpage
\begin{figure}[h]
    \centering
    \includegraphics[width=.8\textwidth]{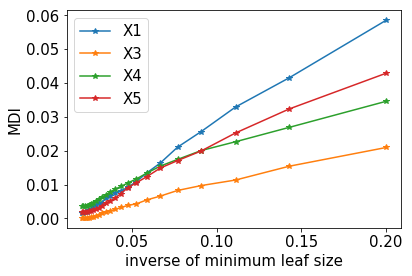}
    \vspace{-3mm}
    \caption{MDI against inverse min leaf size. This is coherent with our theoretical analysis as MDI is proportional to the inverse of minimum leaf size.}
    \label{Fig:sim1d}
\end{figure}
\newpage
\begin{adjustbox}{angle=90}

\newline
\begin{tabular}{@{}lllllllll@{}}
\toprule
        & \multicolumn{4}{c}{Deep tree (min leaf size = 1)}                            & \multicolumn{4}{c}{Shallow tree(min leaf size = 100)}                         \\
        & \multicolumn{2}{c}{Simulated} & \multicolumn{2}{c}{ChIP} & \multicolumn{2}{c}{Simulated} & \multicolumn{2}{c}{ChIP} \\ 
        & C& R& C& R & C & R & C & R \\ \midrule
MDI-oob & 0.762(.019)       & 0.519(.018)      & 0.865(.015)       & 0.980(.006)      & 0.748(.019)       & 0.581(.019)      & 0.939(.011)       & 0.983(.007)      \\
SHAP    & 0.548(.023)       & 0.325(.028)      & 0.821(.023)       & 0.963(.009)      & 0.677(.021)       & 0.462(.025)      & 0.912(.015)       & 0.972(.009)      \\
ranger  & 0.555(.034)       & 0.496(.019)      & 0.726(.038)       & 0.974(.007)      & 0.549(.034)       & 0.487(.022)      & 0.755(.045)       & 0.985(.004)      \\
MDA     & 0.493(.019)       & 0.507(.022)     & 0.542(.025)      & 0.966(.007)      & 0.500(.000)       & 0.577(.018)      & 0.498(.006)       & 0.986(.006)      \\
cforest & 0.649(.029)       & 0.499(.020)      & 0.788(.023)       & 0.929(.026)      & 0.701(.033)       & 0.488(.026)      & 0.900(.024)       & 0.979(.007)      \\
MDI     & 0.118(.009)       & 0.092(.008)      & 0.597(.023)       & 0.706(.019)      & 0.632(.022)       & 0.397(.025)      & 0.877(.020)       & 0.971(.009)      \\ \bottomrule 
\multicolumn{9}{l}{\small "C" stands for classification, "R" stands for regression. }
\end{tabular}
\end{adjustbox}

\end{document}